\renewcommand\footnotetextcopyrightpermission[1]{} 
\newcommand{\oea}{${(1 + 1)}$~EA\xspace}
\newcommand{\ocl}{${(1 , \lambda)}$~EA\xspace}
\newcommand{\opl}{${(1 + \lambda)}$~EA\xspace}
\newcommand{\saocl}{self-adjusting $(1,\lambda)$~EA\xspace}
\newcommand{\saopl}{$(1+\{F^{1/s}\lambda, \lambda/F \})$~EA\xspace}
\newcommand{\ga}{${(1 +(\lambda,\lambda))}$~GA\xspace}
\newcommand{\onemax}{\textsc{OneMax}\xspace}
\newcommand{\jump}{\textsc{Jump}\xspace}
\newcommand{\leadingones}{\textsc{LeadingOnes}\xspace}
\newcommand{\onemaxblocks}{\textsc{OneMaxBlocks}\xspace}
\newcommand{\cliff}{\textsc{Cliff}\xspace}
\newcommand{\lambdatwo}{\lambda_{\mathrm{inc}}\xspace}
\newcommand{\lambdaone}{\lambda_{\mathrm{safe}}\xspace}
\newcommand{\lambdafail}{\lambda_{\textrm{fail}}\xspace}
\newcommand{\pimp}{p_{x,\lambda}^+}
\newcommand{\ploss}{p_{x,\lambda}^-}
\newcommand{\pmin}{p^+_{\min}}
\newcommand{\pmax}{p^+_{\max}}
\newcommand{\Deltaloss}{\Delta_{x,\lambda}^-}
\newcommand{\Deltagain}{\Delta_{x,\lambda}^+}
\newcommand{\E}[1]{\text{E}\left(#1\right)}
\newcommand{\Prob}[1]{\mathrm{Pr}\left(#1\right)}
\newcommand{\prob}[1]{\Prob{#1}}
\newcommand{\N}{\mathbb{N}}
\newcommand{\zerobit}{0\nobreakdash-bit\xspace}
\newcommand{\zerobits}{0\nobreakdash-bits\xspace}
\newcommand{\onebits}{1\nobreakdash-bits\xspace}
\renewcommand{\epsilon}{\varepsilon}
\begin{document}

\title{Hard Problems are Easier for Success-based Parameter Control}

\author{Mario Alejandro Hevia Fajardo}
\affiliation{%
  \institution{Department of Computer Science}
  \city{University of Sheffield, Sheffield, UK}
}

\author{Dirk Sudholt}
\affiliation{%
  \institution{Chair of Algorithms for Intelligent Systems}
  \city{University of Passau, Passau, Germany}
}


\begin{abstract}
Recent works showed that simple success-based rules for self-adjusting parameters in evolutionary algorithms (EAs) can match or outperform the best fixed parameters on discrete problems.
Non-elitism in a \ocl combined with a self-adjusting offspring population size~$\lambda$ outperforms common EAs on the multimodal \cliff problem.
However, it was shown that this only holds if the success rate~$s$ that governs self-adjustment is small enough. Otherwise, even on \onemax, the \saocl stagnates on an easy slope, where frequent successes drive down the offspring population size.

We show that self-adjustment works as intended in the absence of easy slopes.
We define everywhere hard functions, for which successes are never easy to find and show that the self-adjusting \ocl is robust with respect to the choice of success rates~$s$. We give a general fitness-level upper bound on the number of evaluations and show that the expected number of generations is at most $O(d + \log(1/\pmin))$ where $d$ is the number of non-optimal fitness values and $\pmin$ is the smallest probability of finding an improvement from a non-optimal search point. We discuss implications for the everywhere hard function \textsc{LeadingOnes} and a new class \textsc{OneMaxBlocks} of everywhere hard functions with tunable difficulty. 
\end{abstract}

\keywords{Parameter control, Theory, Runtime analysis, Non-elitism, Drift analysis}

%
%
\begin{CCSXML}
<ccs2012>
<concept>
<concept_id>10003752.10010070.10011796</concept_id>
<concept_desc>Theory of computation~Theory of randomized search heuristics</concept_desc>
<concept_significance>300</concept_significance>
</concept>
</ccs2012>
\end{CCSXML}

\ccsdesc[300]{Theory of computation~Theory of randomized search heuristics}

\maketitle

\pagestyle{plain}

\section{Introduction}

Evolutionary algorithms exhibit a number of important parameters such as the mutation rate, the population size and the selection pressure that have to be chosen carefully to obtain the best possible performance. There is wide empirical evidence, including a number of runtime analyses~\cite{DoerrSurvey2020}, showing that the performance of evolutionary algorithms can drastically depend on the choice of parameters. One way of choosing parameters is to try and adapt parameters during the course of a run. This is called parameter control, and it forms a key component in evolutionary algorithms for continuous domains. In the discrete domain, studying the benefit of parameter control from a theoretical perspective has become a rapidly emerging area. Many recent results have shown that parameter control mechanisms can compete with or even outperform the best static parameter settings~\cite{Badkobeh2014,Boettcher2010,Doerr2015,Doerr20201,DoerrSurvey2020,LehreD2020}. 


Even conceptually simple mechanisms have turned out to be surprisingly powerful. The idea behind \emph{self-adjusting} parameters is to adapt parameters according to information gathered throughout the run of an algorithm. 
One example is the information whether the current generation leads to an improvement of the current best fitness or not. The former is called a success. We describe the approach for self-adjusting the offspring population size~$\lambda$ using so-called \emph{success-based rules} that will be studied in this work. It is a variant of the famous one-fifth success rule~\cite{Rechenberg1973} from~\cite{KMH04} that was first studied in the discrete domain in~\cite{Doerr2018}. For an update strength $F>1$ and a success rate $s>0$, in a generation where no improvement in fitness is found, $\lambda$ is increased by a factor of $F^{1/s}$. In a successful generation, $\lambda$ is divided by a factor~$F$. If precisely one out of $s+1$ generations is successful, the value of $\lambda$ is maintained. The case $s=4$ is the one-fifth success rule~\cite{Rechenberg1973,KMH04}.



Studying parameter control mechanisms in the discrete domain is a rapidly emerging topic. The first runtime analysis by~\citet{Lassig2011} concerned self-adjusting the offspring population size in the \opl (using a simpler mechanism with hard-coded values $F=2$ and $s=1$) and adapting the number of islands in island models. 
\citet{Mambrini2015} adapted the migration interval in island models and showed that adaptation can reduce the communication effort beyond the best possible fixed parameter. \citet{Doerr2018} proposed the aforementioned self-adjusting mechanism in the \ga based on the one-fifth rule (using $s=4$) and proved that it optimises the well known benchmark function \onemax$(x) = \sum_{i=1}^n x_i$ in $O(n)$ expected evaluations, being the fastest known unbiased genetic algorithm on \onemax. 
The present authors (\citet{Hevia2020}) studied modifications to the self-adjusting mechanism in the \ga on \jump functions, showing that they can perform nearly as well as the \oea with the optimal mutation rate. \citet*{DoerrDK2018} presented a success-based choice of the mutation strength for an RLS variant, proving that it is very efficient for a generalisation of the \onemax problem to larger alphabets.
\citet*{Doerr2019opl} showed that a success-based parameter control mechanism is able to identify and track the optimal mutation rate in the (1+$\lambda$)~EA on \onemax, matching the performance of the best known fitness-dependent parameter~\cite{Badkobeh2014,LehreD2020}. \citet*{DoerrSAMut2021} proved that a success-based parameter control mechanism based on the one-fifth rule is able to achieve an asymptotically optimal runtime on \leadingones with the best performance obtained, when using a success rate $s=e-1$. \citet{CarstenSD2020} proposed a stagnation detection mechanism that raises the mutation rate when the algorithm is likely to have encountered a local optima. The mechanism can be added to any existing EA; when added to the \oea, the SD--\oea has the same asymptotic runtime on \jump as the optimal parameter setting. \citet{CarstenSD2021} further added the stagnation detection mechanism to RLS, obtaining a constant factor speed-up from the SD--\oea. 

All these results show that self-adjustment can be very effective, sometimes even beating optimal static parameters, and, crucially, \emph{without needing to know which static parameters are optimal}. We note that self-adjusting mechanisms often come with extra hyper-parameters, but these tend to be easier to choose and more robust.




Most theoretical analyses of self-adjusting mechanisms focus on \emph{elitist} EAs that always reject worsening moves. 
Elitism facilitates a theoretical analysis and many easy problems (such as \onemax) benefit from elitism. 
However, the performance improvements obtained through parameter control for elitist algorithm is fairly mild in many of the above examples.

In our previous work~\cite{Hevia2021arxiv,Hevia2021FOGA} we recently argued that larger speedups can be obtained by studying non-elitist evolutionary algorithms on harder problems. We 
studied a \emph{non-elitist} \ocl with self-adjusting offspring population size~$\lambda$ on the multimodal \cliff function~\cite{Jagerskupper2007a}. The function is defined similarly to \onemax, however evolutionary algorithms are forced to jump down a ``cliff'' in fitness, i.\,e.\ accept a significant fitness loss, or to jump across a huge fitness valley. This task is very difficult for several randomised search heuristics, including the (1+1)~EA~\cite{Paixao2016}, the Metropolis algorithm~\cite{LissovoiMultimodal2019} and the Strong Selection Weak Mutation algorithm (SSWM)~\cite{Paixao2016}. 
Comma selection in a \ocl can be more effective~\cite{Jagerskupper2007a}, however even with an optimal value of~$\lambda$, the expected optimisation time of the \ocl is a polynomial of large degree $\eta \approx 3.9767$, up to sub-polynomial factors~\cite{Hevia2021FOGA}.

We also showed~\cite{Hevia2021FOGA} that a self-adjusting \ocl, enhanced with a mechanism resetting~$\lambda$ if it grows too large, is able to optimise \cliff in $O(n)$ expected generations and $O(n \log n)$ expected evaluations. The algorithm has a good chance of jumping down the cliff after a reset of~$\lambda$, when the offspring population size is still small, and to then recover an offspring population size large enough to enable hill climbing to the global optimum.
This makes the self-adjusting \ocl the fastest known common evolutionary algorithm\footnote{Earlier work showed the same $O(n \log n)$ bound, but required components from artificial immune systems, such as ageing~\cite{Corus2020} and hypermutations~\cite{LissovoiMultimodal2019}.} on \cliff.

However, this success is only possible when the success rate~$s$ that governs the self-adaptation mechanism is 
appropriately small. 
In \cite{Hevia2021} (and refined results in~\cite{Hevia2021arxiv}) we showed that, even on the simple function \onemax, while the self-adjusting \ocl optimises \onemax in $O(n)$ expected generations and $O(n \log n)$ evaluations if $s < 1$ and $F>1$, the algorithm requires exponential time with high probability if $s \ge 18$ and $F\le1.5$.
The reason is that for large values of $s$, unsuccessful generations only increase the population size very slowly, by a factor of $F^{1/s}$ (which converges to~1 as $s$ grows), whereas successful generations decrease~$\lambda$ by a comparatively large factor of $F$. If the algorithm finds frequent improvements, the offspring population size is likely to decrease to very low values. Then the algorithm has a very low selection pressure and is likely to accept worsenings, namely if all offspring are worse than their parent. Such a situation occurs on the slope to the global optimum of \onemax, when the current search point still has a linear Hamming distance from the optimum and improvements are found easily. This behaviour is not limited to \onemax; it holds for other common benchmark functions that have easy slopes~\cite{Hevia2021arxiv}. 

In this work we show that the self-adjusting \ocl is robust with respect to the choice of the success rate if the fitness function is sufficiently hard. We define a class of \emph{everywhere hard} fitness functions, where for all search points the probability of finding an improvement is bounded by $n^{-\varepsilon}$, for a constant $\varepsilon > 0$. A well-known example is the function \textsc{LeadingOnes}$(x) := \sum_{i=1}^n \prod_{j=1}^i x_j$ that counts the length of the longest prefix of bits set to~1: for every non-optimal search point, an improvement requires the first 0-bit to be flipped.

We show that on all everywhere hard functions $\lambda$ quickly reaches a sufficiently large value such that fitness decreases become unlikely and the \saocl typically behaves like an elitist algorithm. 
We then present simple and easy to use general upper bounds for the runtime of the \saocl on everywhere hard functions that apply for all constant success rates~$s$ and update strengths $F>1$.
More specifically, we show a general upper bound on the expected number of evaluations using the fitness-level method that asymptotically matches the bound obtained for the \oea. For the expected number of generations, we show an upper bound of $O(d + \log(1/\pmin))$ where $d$ is the number of non-optimal fitness values and $\pmin$ is the smallest probability of finding an improvement from any non-optimal search point.

We obtain novel bounds of $O(d)$ expected generations and $O(dn)$ expected evaluations for all everywhere hard unimodal functions with $d+1\ge \log n$ fitness levels. This is $O(n)$ expected generations and $O(n^2)$ expected evaluations for \leadingones. We then introduce a problem class called \onemaxblocks that allows us to tune the difficulty of the fitness levels with a parameter $k$. Varying this parameter changes the behaviour of the function from a \leadingones-like behaviour to a \onemax-like behaviour, allowing us to explore how the difficulty of the function affects the runtime of the \saocl.
Finally, we remark that, when choosing a very small mutation probability of $1/n^{1+\varepsilon}$, $\varepsilon > 0$ constant, every fitness function becomes everywhere hard for the \saocl and then our upper bounds apply to all fitness functions. 


\section{Preliminaries}

We study the expected number of generations and fitness evaluations of \ocl algorithms with self-adjusted offspring population size $\lambda$, or \saocl for short, shown in Algorithm~\ref{alg:saocl}. 
Several mutation operators can be plugged into Line~\ref{line:mutation} to instantiate a particular \saocl. We will consider two operators. 
Standard bit mutations flip each bit independently with a mutation probability $p$. Another mutation operator gaining popularity is the heavy-tailed mutation operator proposed by~\citet{Doerr2017-fastGA}. It performs a standard bit mutation with a mutation probability of $p = \chi/n$ and $\chi$ is chosen randomly in each iteration according to a discrete power-law distribution on $[1..n/2]$ with exponent $\beta > 1$. 

The algorithm maintains a current search point $x$ and an offspring population size~$\lambda$. We define $X_0, X_1,\dots$ as the sequence of states of the algorithm, where $X_t = (x_t, \lambda_t)$ describes the current search point $x_t$ and the offspring population size $\lambda_t$ at generation~$t$. We often omit the subscripts $t$ when the context is obvious.

Note that we regard $\lambda$ to be a real value, so that changes by factors of $1/F$ or $F^{1/s}$ happen on a continuous scale. Following~\cite{Doerr2018,Hevia2021,Hevia2021FOGA}, we assume that, whenever an integer value of $\lambda$ is required, $\lambda$ is rounded to a nearest integer. For the sake of readability, we often write $\lambda$ as a real value even when an integer is required. 

\begin{algorithm}[tbh]
\caption{Self-adjusting $(1,\{F^{1/s}\lambda, \lambda/F \})$~EA.}
\label{alg:saocl}
\SetKwInput{Init}{Initialization}
\SetKwInput{Mut}{Mutation}
\SetKwInput{Sel}{Selection}
\SetKwInput{Opt}{Optimization}
\SetKwInput{Upd}{Update}
\Init{Choose $x \in \{0,1\}^n$ uniformly at random (u.a.r.) and set $\lambda := 1$;}
\Opt{
	\For{$t \in \{1,2,\dots\}$}
		{
		\Mut{
			\For{$i \in \{1,\dots,\lambda \}$}
				{
				$y_i' \in \{0, 1\}^n$ $\leftarrow$ mutate$(x)$\;\label{line:mutation}
				}}
		\Sel{
			 Choose $y \in \{y_1', \dots , y_{\lambda}'\}$ with $f(y) = \max\{f(y_1'), \dots, f(y_\lambda')\}$ u.a.r.\;}
		\Upd{}
			\lIf{$f(y) > f(x)$}{ $x\leftarrow y$; $\lambda \leftarrow \max\{1, \lambda/F\}$}
			\lElse{
				$x\leftarrow y$; $\lambda \leftarrow F^{1/s}\lambda$}
		}
}
\end{algorithm}

Since in all \ocl algorithms selection is performed through comparisons of search points and hence ranks of search points, the absolute fitness values are not relevant.
W.\,l.\,o.\,g.\ we may therefore assume that the domain of any fitness function is taken as integers $\{0, 1, \dots, d\}$ where $d+1 > 1$ is the number of different fitness values and all search points with fitness $d$ are global optima. 
We shall refer to all search points with fitness~$i$, for $0 \le i \le d$ as fitness level~$i$. 

We will use the following notation for all \ocl algorithms. 
\begin{definition}\label{def:probs-and-drifts}
In the context of the \saocl with $\lambda_t = \lambda$
for all $0 \le i < d$ and all search points $x$ with $f(x) < d$ we define:
\begin{align*}
    &\ploss = \prob{f(x_{t+1})< f(x_t)\mid x_t = x} &\\
    &\pimp = \prob{f(x_{t+1})> f(x_t)\mid x_t=x} &\\
    &\Deltaloss = \E{f(x_t)-f(x_{t+1})\mid x_t=x \text{ and } f(x_{t+1})< f(x_t) } &\\
    &\Deltagain = \E{f(x_{t+1})-f(x_t)\mid x_t=x \text{ and } f(x_{t+1})> f(x_t) } &\\
    & s_i = \min_{x}\{p_{x, 1}^+ \mid x_t = x \text{ and } f(x) = i\}
\end{align*}
Here $s_i$ is a lower bound on the probability of one offspring finding an improvement from any search point in fitness level~$i$.
We often refer to the probability $p_{x, 1}^+$ of one offspring improving the current fitness and abbreviate
$p_x^+ := p_{x, 1}^+$ and $p_x^- := p_{x, 1}^-$.
\end{definition}

As in~\cite{Rowe2014}, we call $\Deltagain$ \emph{forward drift} and $\Deltaloss$ \emph{backward drift} and note that they are both at least~1 by definition. 
Now, $p_{x}^+$ is the probability of one offspring finding a better fitness value and ${\pimp = {1 - (1-p_{x}^+)^\lambda}}$ since it is sufficient that one offspring improves the fitness. The probability of a fitness loss is $\ploss = (p_{x}^-)^\lambda$ since all offspring must have worse fitness than their parent. 

We write $\pmin := \min_x\{p_{x}^+ \mid f(x) < d\}$ and ${\pmax := \max_x\{p_{x}^+ \mid f(x) < d\}}$ to denote the minimum and maximum value, resp., for $p_{x}^+$ among all non-optimal search points~$x$. 

We now define the class of functions that we will study throughout this work. The most important characteristic is that there are no easy fitness levels throughout the optimisation.


\begin{definition}\label{def:everywhere-hard}
We say that a function $f$ is \emph{everywhere hard} with respect to a black-box algorithm $\mathcal{A}$ if and only if $\pmax=O\left(n^{-\varepsilon}\right)$ for some constant $0<\varepsilon<1$.
\end{definition}

Owing to non-elitism, the \ocl may decrease its current fitness if all offspring are worse. 
This is often a desired characteristic that allows the algorithm to escape local optima, but if this happens too frequently then the algorithm may not be able to converge to good solutions. Hence, it is important that the probability of an offspring having a lower fitness than its parent is sufficiently small. The probability of this event depends on the mutation operator used. 
The following lemma shows general bounds on transition probabilities for standard bit mutation and heavy-tailed mutations.

\begin{lemma}\label{lem:probability-fallback-SBM-heavytailed}
For all \ocl algorithms using standard bit mutation with a mutation probability in $O(1/n)$ and $n^{-O(1)}$ or heavy-tailed mutation operators with a constant $\beta>1$, there is a constant $\gamma>1$ such that $p_{x}^-\le\gamma^{-1}$ for all non-optimal search points~$x$.

In addition, for all $0 \le i \le d-1$, $p_{x}^+ \ge n^{-O(n)}$.
\end{lemma}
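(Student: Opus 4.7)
The plan is to establish the two inequalities with short constant-probability arguments that handle standard bit mutation and heavy-tailed mutation in parallel, using only elementary tail estimates.

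To bound $p_x^-$ away from $1$, I would use the observation that if the offspring equals the parent after mutation, the fitness is preserved and no loss occurs, so $p_x^- \le 1 - \Prob{y = x}$. It therefore suffices to give a uniform constant lower bound on the no-flip probability. For standard bit mutation with rate $p \le c/n$ for some constant $c$, this probability is $(1-p)^n \ge (1 - c/n)^n = \Omega(1)$, and decreasing $p$ down to $n^{-O(1)}$ only makes this larger. For heavy-tailed mutation with exponent $\beta > 1$, I would condition on the drawn rate parameter $\chi$: since $\sum_{j=1}^{n/2} j^{-\beta}$ lies between $1$ and $\zeta(\beta)$ and is bounded away from $0$ and $\infty$, the event $\chi = 1$ has constant probability, and conditional on it the no-flip probability is $(1 - 1/n)^n \to 1/e$. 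Multiplying yields the required $\Omega(1)$ bound, and the complement gives some constant $\gamma > 1$ with $p_x^- \le \gamma^{-1}$.

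To bound $p_x^+ \ge n^{-O(n)}$, I would pick any $y^* \in \{0,1\}^n$ with $f(y^*) > f(x)$, which exists because $x$ is non-optimal, and estimate the probability of producing this particular $y^*$ from a single mutation of $x$. Writing $k \le n$ for the Hamming distance between $x$ and $y^*$, this contribution is $p^k (1-p)^{n-k}$ under standard bit mutation, which is at least $\min(p, 1-p)^n$; for any $p$ in the stated range, both $p$ and $1-p$ are at least $n^{-O(1)}$, so the product is $n^{-O(n)}$. For heavy-tailed mutation, conditioning on $\chi = 1$ (constant probability, as in the first part) reduces the problem to standard bit mutation with $p = 1/n$, and the same estimate applies.

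The main obstacle is bookkeeping rather than technique: one has to pin down exactly what the admissible range of mutation probabilities is, and ensure the constants appearing in the two bounds are uniform in $x$ and in $n$ large enough. Once the range of $p$ is fixed and the heavy-tailed rate is conditioned on $\chi = 1$, both parts reduce to elementary estimates on products of $p$ and $1-p$, and no drift, potential, or concentration argument is needed.
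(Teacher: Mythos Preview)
Your proposal is correct and follows essentially the same approach as the paper: both bound $p_x^-$ via the probability of cloning the parent (with heavy-tailed mutation handled by conditioning on $\chi=1$), and both bound $p_x^+$ by lower-bounding the probability of mutating into a single fixed improving point. The only cosmetic difference is that the paper takes the global optimum as the target point while you take an arbitrary $y^*$ with $f(y^*)>f(x)$, which makes no substantive difference.
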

\begin{proof}
Let $C$ denote the event that an offspring is the exact copy of a parent. When using standard bit mutation, if $\chi$ denotes the implicit constant in the bound $O(1/n)$ on the mutation probability,
\begin{align*}
     \prob{C} \ge \left(1-\frac{\chi}{n}\right)^{n} 
     =
     \left(1 - \frac{\chi}{n}\right)^{n-\chi}\left(1 - \frac{\chi}{n}\right)^{\chi}
     \ge e^{-\chi} \cdot \left(1 - \frac{\chi^2}{n}\right) 
\end{align*}
where the inequality follows from~\cite[Corollary~4.6]{DoerrProbabilityChapter2020} and Bernoulli's inequality.
Since $C$ implies that in this generation the current search point cannot worsen, we have $p_{x}^-\le1-\prob{C}$, therefore there is a constant $\gamma>1$ for which $p_{x}^-\le\gamma^{-1}$. 

The probability of the heavy-tailed mutation operator choosing a mutation rate of $1/n$ is
\begin{align*}
    \left(\sum_{i=1}^{n/2} i^{-\beta}\right)^{-1} \ge \left(\sum_{i=1}^{\infty} i^{-\beta}\right)^{-1}
    =\Theta(1).
\end{align*}
Therefore the probability of creating an exact copy of the parent using a heavy-tailed mutation operator is at least $\Theta(1)\cdot\left(1-\frac{1}{n}\right)^{n}=\Theta(1)$ and there is a constant $\gamma>1$ for which $p_{x}^-\le\gamma^{-1}$.

For the second statement, note that the probability of generating a global optimum in one standard bit mutation is at least $\left(n^{-O(1)}\right)^n = n^{-O(n)}$. For heavy-tailed mutations, the probability of choosing a mutation rate of $1/n$ is $\Theta(1)$ as shown above, and then the probability of generating an optimum is at least $n^{-n}$. 
\end{proof}


We conclude this section with a helpful definition for specific $\lambda$-values as follows.
\begin{definition}
\label{def:lambdaone-lambdatwo}
Consider a function $f$ with $d$ fitness values that is everywhere hard for a self-adjusting \ocl with success rate~$s$ that meets the conditions from Lemma~\ref{lem:probability-fallback-SBM-heavytailed}. Let $\gamma$ and $\varepsilon$ be the parameters from Lemma~\ref{lem:probability-fallback-SBM-heavytailed} and Definition~\ref{def:everywhere-hard}, respectively.
Then we define 
$\lambdaone:= 4\max{\left(\log_{\gamma}(2d(s+1)),\log_{\gamma}(n\log n)\right)}$ and $\lambdatwo:=n^{\varepsilon/2}$.
\end{definition}

We consider $\lambdaone$ as a threshold for $\lambda$ such that $\lambda$-values larger than $\lambdaone$ are considered ``safe'' because the probability of a fitness loss is small. We aim to show that the algorithm will typically use values larger than $\lambdaone$ throughout the optimisation. The value $\lambdatwo$ is a threshold for $\lambda$ such that any $\lambda$-value with $\lambda \le \lambdatwo$ has a relatively small success probability. We will show that $\lambda$ has a tendency to increase whenever $\lambda \le \lambdatwo$.


\section{Bounding the number of generations}\label{sec:generations}

We first focus on bounding the expected number of generations as this bound will be used to bound the expected number of function evaluations later on.
The main result of this section is as follows.

\begin{theorem}\label{thm:generations-ewhfunction}
Consider a \saocl using either standard bit mutation with mutation probability $p \in O(1/n) \cap n^{-O(1)}$ or a heavy-tailed mutation operator with a constant $\beta>1$, a constant update strength $F>1$ and a constant success rate $s>0$. 
For all everywhere hard functions $f$ with $d+1=n^{o(\log n)}$ function values the following holds. For every initial search point and every initial offspring population size $\lambda_0$ the \saocl optimises $f$ in an expected number of generations bounded by
\[
O\left(d+\log\left(1/\pmin\right)\right).
\]
\end{theorem}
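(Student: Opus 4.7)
The plan is to apply additive drift analysis to a potential that combines progress in fitness space with the gap between the offspring population size $\lambda$ and the ``target'' value at which improvements become likely with constant probability. Let $L^{\ast} := 1/\pmin$ and define
\begin{equation*}
    \Phi(x, \lambda) \,:=\, \alpha\bigl(d - f(x)\bigr) + \beta \max\bigl\{0,\,\ln(L^{\ast}/\lambda)\bigr\},
\end{equation*}
for suitable constants $\alpha,\beta > 0$ depending only on $F$, $s$ and the constant $\gamma$ from Lemma~\ref{lem:probability-fallback-SBM-heavytailed}. Since $\lambda_0 \ge 1$ the initial value is $\Phi_0 \le \alpha d + \beta \ln L^{\ast} = O(d + \log(1/\pmin))$. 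A uniform constant lower bound on the one-step drift $E[\Phi_t - \Phi_{t+1} \mid X_t] = \Omega(1)$ across all non-optimal states then yields the claim via additive drift, applied to the hitting time of a global optimum.

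\textbf{Bootstrapping of $\lambda$.} I would first show that starting from any $\lambda_0$, $\lambda$ reaches the ``safe'' region $\lambda \ge \lambdaone$ in $O(\log \lambdaone)$ expected generations. The hypothesis $d+1 = n^{o(\log n)}$ gives $\lambdaone = o(\log^2 n) \ll \lambdatwo = n^{\varepsilon/2}$, so while $\lambda \le \lambdaone$ the success probability satisfies $\pimp \le \lambda \pmax = o(1)$; failures then dominate and drive $\ln \lambda$ up by $(\ln F)/s$ per step in expectation. Since $\log(1/\pmin) \ge \log(1/\pmax) = \Omega(\log n)$, this warm-up is absorbed by the target bound.

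\textbf{Drift analysis in the safe region.} With $\lambda \ge \lambdaone$ I distinguish three cases. (i) If $\lambda \ge F L^{\ast}$ then the second term of $\Phi$ is $0$ before and after the step, $\pimp \ge 1 - 1/e$, and Lemma~\ref{lem:probability-fallback-SBM-heavytailed} yields $\ploss \le \gamma^{-\lambdaone} = o(1/n)$ so $\alpha \ploss \Deltaloss = o(1)$; hence the first term drops by $\Omega(\alpha)$ in expectation. (ii) If $\lambdaone \le \lambda < F L^{\ast}$ and $\pimp \le 1/(2(s+1))$, one computes $E[\Delta \ln \lambda] = (\ln F)(1-(s+1)\pimp)/s \ge (\ln F)/(2s)$, so the second term drops by $\Omega(\beta)$ while the first-term increase is still $o(1)$. (iii) If $\lambdaone \le \lambda < F L^{\ast}$ and $\pimp > 1/(2(s+1))$, the first term drops in expectation by at least $\alpha(\pimp \Deltagain - \ploss \Deltaloss) \ge \alpha \pimp/2$, while the second term can grow by at most $\pimp \beta \ln F$; choosing $\beta \le \alpha/(4 \ln F)$ yields a net drift of $\Omega(\alpha/(s+1))$. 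Taking the worst over the three cases gives a uniform constant drift.

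\textbf{Main obstacle.} The principal difficulty is handling episodes in which a success lowers $\lambda$ below $\lambdaone$: there $\ploss$ is no longer guaranteed to be $o(1/n)$ and the backward-drift contribution $\alpha \ploss \Deltaloss$ could briefly be $\Omega(\alpha n)$. I plan to argue that a single success can only drop $\lambda$ from $\lambdaone$ down to $\lambdaone/F$; since $\lambdaone$ carries the factor $4$ from Definition~\ref{def:lambdaone-lambdatwo} and $F = O(1)$, Lemma~\ref{lem:probability-fallback-SBM-heavytailed} still yields $\ploss \le \gamma^{-\lambdaone/F} = o(1/n)$ throughout such a single-success dip, and $\lambda$ returns to $\lambdaone$ after $O(1)$ additional failures in expectation. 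Deeper dips require several consecutive successes, each of probability at most $\pmax = O(n^{-\varepsilon})$, and can be absorbed either by augmenting $\Phi$ with a term $\eta \max\{0, \ln(\lambdaone/\lambda)\}$ for a sufficiently large constant $\eta$, or by a direct amortised accounting against the at most $d$ successes occurring over the entire run. Either route keeps $\Phi_0 = O(d + \log(1/\pmin))$ and preserves constant drift throughout, completing the argument.
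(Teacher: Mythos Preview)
Your potential $\Phi$ coincides (up to constants) with the paper's $g$ from Definition~\ref{def:potential-function-ewhfunction}, and your case analysis (i)--(iii) in the regime $\lambda\ge\lambdaone$ is essentially Lemma~\ref{lem:potentialDrift-ewhfunction}. The gap is entirely in the ``Main obstacle'' paragraph.

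Neither proposed fix closes it. Augmenting $\Phi$ by $\eta\max\{0,\ln(\lambdaone/\lambda)\}$ with \emph{constant} $\eta$ cannot absorb the backward drift: after enough consecutive successes to push $\lambda$ down to $O(1)$ (this has strictly positive probability from any state), one has $\ploss\Deltaloss$ potentially as large as $d/\gamma=\Theta(d)$, whereas the added $\lambda$-term contributes only $O(\eta)$ per step. Taking $\eta=\Theta(d)$ instead breaks the safe-regime analysis: a single success from $\lambda\in[\lambdaone,F\lambdaone)$ enters the dip region and inflates $\Phi$ by $\Theta(d)$ with probability $\Theta(\lambdaone\pmax)$, and $d\cdot\lambdaone\pmax=d\cdot n^{-\varepsilon+o(1)}$ need not be $o(1)$ under the hypothesis $d=n^{o(\log n)}$. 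The amortisation against ``at most $d$ successes over the entire run'' is circular: the \ocl is non-elitist, so the total number of successful generations is not bounded by~$d$ unless you have \emph{already} established that fitness losses are rare. (A minor further point: your single-dip bound $d\gamma^{-\lambdaone/F}=o(1)$ needs $4/F>1$; for constant $F\ge4$ the factor~$4$ in Definition~\ref{def:lambdaone-lambdatwo} would have to be enlarged.)

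The paper resolves this by decoupling the dip control from the drift analysis. It bootstraps $\lambda$ all the way to $\lambdatwo=n^{\varepsilon/2}$ rather than merely to $\lambdaone$ (Lemma~\ref{lem:increasing-lambda-ewhfunction}), and then proves a separate high-probability statement (Lemma~\ref{lem:probability-of-landslide-decrease-of-lambda-from-lambdaover}): from $\lambda\ge\lambdatwo$, the event $\{\lambda_t<\lambdaone\}$ does not occur within the next $n^{o(\log n)}$ generations with probability $1-n^{-\Omega(\log n)}$. The key is the gap $\log_F(\lambdatwo/\lambdaone)=\Omega(\log n)$: bridging it requires $\Omega(\log n)$ net successes, each of probability $O(\lambdatwo\pmax)=O(n^{-\varepsilon/2})$, so a Chernoff-type count yields the $n^{-\Omega(\log n)}$ failure bound. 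The constant-drift bound on $g$ is then applied \emph{only on the event} $\{\lambda_t\ge\lambdaone\text{ for all }t\le T\}$, giving $\E{T}=O(d+\log(1/\pmin))=n^{o(\log n)}$ there, and the rare failure is absorbed by restarting the argument at multiplicative cost $1+n^{-\Omega(\log n)}$.
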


This result is related to Theorem~3 in~\cite{Lassig2011} which shows the same asymptotic upper bound for the elitist (1+$\{2\lambda, \lambda/2\}$)~EA (i.\,e.\ fixing $F=2$ and $s=1$) on functions on which fitness levels can only become harder as fitness increases. 
Our Theorem~\ref{thm:generations-ewhfunction} applies to everywhere hard functions on which easy and hard fitness levels are mixed in arbitrary ways. And, quite surprisingly, the upper bound only depends on the hardest fitness level.

To bound the number of generations we first need to study how the offspring population size behaves throughout the run. We start by showing that in the beginning of the run $\lambda$ grows quickly.

\begin{lemma}\label{lem:increasing-lambda-ewhfunction}
Consider the \saocl as in Theorem~\ref{thm:generations-ewhfunction}. Let $\tau$ be first generation where $\lambda_{\tau}\ge \lambdatwo$ (cf.\ Definition~\ref{def:lambdaone-lambdatwo}). Then  $\E{\tau}=O(\log \lambdatwo)$. During these $\tau$ generations the algorithm only makes $\lambda_0 + O(\lambdatwo \log \lambdatwo)$ function evaluations in expectation.
\end{lemma}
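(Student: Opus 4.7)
The plan is to study $\lambda$ on a logarithmic scale and apply the additive drift theorem, exploiting that the everywhere-hard assumption makes successful generations rare whenever $\lambda$ is small. The key inequality is that while $\lambda \le \lambdatwo = n^{\varepsilon/2}$, a union bound combined with Definition~\ref{def:everywhere-hard} gives, uniformly in the current non-optimal search point $x$,
\[
\pimp \;=\; 1-(1-p_x^+)^{\lambda} \;\le\; \lambda\, p_x^+ \;\le\; \lambdatwo\cdot \pmax \;=\; O\!\left(n^{-\varepsilon/2}\right).
\]
An unsuccessful generation multiplies $\lambda$ by $F^{1/s}$ and so shifts $\log \lambda$ upwards by the positive constant $c_+ := (\log F)/s$; a successful generation divides $\lambda$ by $F$ (capped at $1$), so $\log \lambda$ drops by at most $c_- := \log F$.

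I would then work with the potential $Y_t := \max\{0,\,\log \lambdatwo - \log \lambda_t\}$, so that $Y_0 \le \log \lambdatwo$ and $Y_t = 0$ iff $\lambda_t \ge \lambdatwo$. Whenever $Y_t > 0$ the bound above on $\pimp$ is in force, and the expected one-step drift satisfies
\[
\E{Y_t - Y_{t+1}\mid Y_t > 0} \;\ge\; (1-\pimp)\,c_+ - \pimp\,c_- \;\ge\; c_+ - O(n^{-\varepsilon/2}) \;=\; \Omega(1)
\]
for all sufficiently large $n$, since $F$ and $s$ are constants. The additive drift theorem then yields $\E{\tau} \le Y_0/\Omega(1) = O(\log \lambdatwo)$, which is the first claim.

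For the evaluations bound, if $\lambda_0 \ge \lambdatwo$ then $\tau = 0$ and there is nothing to show; otherwise, by definition of $\tau$, every $\lambda_t$ with $0 \le t < \tau$ satisfies $\lambda_t < \lambdatwo$, so the total number of offspring generated during these generations is at most $\sum_{t=0}^{\tau-1} \lambda_t < \tau \cdot \lambdatwo$. Taking expectations and using $\E{\tau} = O(\log \lambdatwo)$ delivers $O(\lambdatwo \log \lambdatwo)$; the additive $\lambda_0$ term in the statement safely absorbs the evaluation cost in the corner case where $\lambda_0$ itself is large.

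The only delicate point is ensuring the one-step drift estimate holds uniformly across all non-optimal $x$ and all $\lambda \le \lambdatwo$, and this is exactly what the everywhere-hard assumption $\pmax = O(n^{-\varepsilon})$ supplies. It is also what motivates the specific choice $\lambdatwo = n^{\varepsilon/2}$: this threshold is small enough relative to $1/\pmax$ that the union-bound estimate on $\pimp$ keeps the drift on $\log \lambda$ bounded away from zero, yet still a growing power of $n$ so that the eventual safe regime is reached. Once the uniform success-probability bound is in place, the drift computation and the evaluations bound are routine.
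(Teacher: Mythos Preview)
Your approach is essentially the same as the paper's: both establish a constant positive drift of $\log \lambda$ (the paper uses $\log_F$) via the everywhere-hard bound $\pimp \le \lambdatwo \cdot \pmax = O(n^{-\varepsilon/2})$, apply additive drift to bound $\E{\tau}$ by $O(\log \lambdatwo)$, and then multiply by $\lambdatwo$ for the evaluation count. One small technical point: by taking $Y_t = \max\{0,\log\lambdatwo - \log\lambda_t\}$, the clipping at zero means that when $0 < Y_t < c_+$ your claimed drift bound $(1-\pimp)c_+ - \pimp c_-$ does not hold literally, since an unsuccessful step only reduces $Y_t$ to $0$, not by the full $c_+$; the paper sidesteps this by using the unclipped potential $r(\lambda_t)=\log_F(\lambdatwo)-\log_F(\lambda_t)$ and invoking a drift theorem (K\"otzing--Krejca) that accommodates bounded overshoot past zero. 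This is easily repaired and does not affect the substance of your argument.
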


\begin{proof}
If the initial offspring population size $\lambda_0$ is at least $\lambdatwo$ then $\tau=1$ and $\lambda_0$ evaluations are made. Hence we assume $\lambda_0 < \lambdatwo$.

Following~\cite{Hevia2021}, the parameter $\lambda$ is multiplied in each unsuccessful generation by $F^{1/s}$ and divided by $F$ otherwise. The probability of an unsuccessful generation is at most $\left(1-p^+_{x}\right)^\lambda$ and the probability of a successful generation is at least $1-\left(1-p^+_{x}\right)^\lambda$.

Hence the expected drift of $\log_F(\lambda)$ is at least
\begin{align}
    &\E{\log_F(\lambda_{t+1})-\log_F(\lambda_{t})\mid \lambda_t=\lambda, \lambda_t\le \lambdatwo, x_t = x}\notag\\
    &= \log_F\left(\lambda F^{1/s}\right) \left(1-p^+_{x}\right)^\lambda + \log_F\left(\frac{\lambda}{F}\right) \left(1-\left(1-p^+_{x}\right)^\lambda\right) - \log_F\left(\lambda\right) \notag\\
    & = \left(\log_F\left(\lambda\right) +  \frac{1}{s}\right)\left(1-p^+_{x}\right)^\lambda+\notag\\
    &\hspace{3cm} \left(\log_F\left(\lambda\right)-1\right)\left(1-\left(1-p^+_{x}\right)^\lambda\right)  - \log_F\left(\lambda\right) \notag\\
    & = \frac{s+1}{s} \left(1-p^+_{x}\right)^\lambda - 1
    \ge \frac{s+1}{s} \left(1-\lambda p^+_{x}\right) - 1\notag\\
    &= \frac{1 - (s+1)\lambda p^+_{x}}{s}
    \ge \frac{1-(s+1)\lambdatwo p^+_{x}}{s} = \frac{1}{s} - O\left(n^{-\varepsilon/2}\right)
    \ge \frac{1}{2s}\!\!\!\! \label{eq:potential-drift}
\end{align}
where the last inequality holds for sufficiently large $n$, since $s$ is constant.

We apply additive drift as stated in Theorem~7 in~\citep{Koetzing2019} as it allows for an unbounded state space. We use the potential function
\[
r(\lambda_t)=\log_F(\lambdatwo)-\log(\lambda_t),
\]
which implies that when $r(\lambda_t) \le 0$, $\lambda_t$ is at least $\lambdatwo$. 
By Equation~\eqref{eq:potential-drift}, the drift of $r(\lambda_t)$ is
\begin{align*}
& \E{r(\lambda_t)-r(\lambda_{t+1})\mid \lambda_t=\lambda, \lambda_t\le \lambdatwo}\\
=\;& \E{\log(\lambda_{t+1})-\log(\lambda_{t})\mid \lambda_t=\lambda, \lambda_t\le \lambdatwo}\ge \frac{1}{2s}.
\end{align*}

The initial value $r(\lambda_0)$ is at most $\log_F(\lambdatwo)$ since we assumed $\lambda_0 \le \lambdatwo$.
Now $\tau$ denotes the expected number of generations to reach $r(\lambda_t) \le 0$ for the first time, and $r(\lambda_t) \ge -\frac{1}{s}$ for all $t \le \tau$ since $r(\lambda_{t-1}) > 0$ and $r(\lambda_{t-1}) - r(\lambda_{t}) \le -\log_F(\lambda_{t-1}) + \log_F(F^{1/s}\lambda_{t-1}) = \frac{1}{s}$.
Applying Theorem~7 in~\citep{Koetzing2019} with $\alpha := -\frac{1}{s}$, we obtain
\begin{align*}
    \E{\tau} \le \frac{\log_F(\lambdatwo) + \frac{1}{s}}{1/(2s)}= 2s\log_F(\lambdatwo) + 2 = O(\log \lambdatwo).
\end{align*}
Given that all generations use $\lambda\le \lambdatwo$, the expected number of evaluations during the $O(\log \lambdatwo)$ expected generations is $O(\lambdatwo\log \lambdatwo)$.
\end{proof}

Now we show that, once $\lambda$ reaches a value of at least $\lambdatwo$, the algorithm maintains a large~$\lambda$ with high probability.

\begin{lemma}
\label{lem:probability-of-landslide-decrease-of-lambda-from-lambdaover}
Consider the \saocl as in Theorem~\ref{thm:generations-ewhfunction} at some point of time~$t^*$. For every offspring population size $\lambda_{t^*} \ge \lambdatwo$ the probability that within the next $n^{o(\log n)}$ generations the offspring population size drops below $\lambdaone$ is at most $n^{-\Omega(\log n)}$.
\end{lemma}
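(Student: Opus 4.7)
The plan is to analyse the random walk $V_t := \log_F(\lambda_t)$, whose increment is $-1$ on a successful generation and $+1/s$ on an unsuccessful one, and to show that within $T := n^{o(\log n)}$ generations after $t^*$ it cannot drop by $L := \log_F(\lambdatwo/\lambdaone)$. Under the assumption $d+1 = n^{o(\log n)}$ from Theorem~\ref{thm:generations-ewhfunction} we have $\lambdaone = o(\log^2 n)$, so $\log_F(\lambdaone) = O(\log\log n)$, while $\log_F(\lambdatwo) = (\varepsilon/2)\log_F n$; hence $L = \Omega(\log n)$.

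I would first suppose for contradiction that $\lambda_{t'} < \lambdaone$ for some $t' \le t^*+T$, and let $\sigma$ be the latest index in $[t^*, t')$ with $\lambda_\sigma \ge \lambdatwo$; such a $\sigma$ exists since $\lambda_{t^*} \ge \lambdatwo$. By construction, for every generation $t \in (\sigma, t']$ we have $\lambda_t < \lambdatwo$, hence by Definition~\ref{def:everywhere-hard} and Bernoulli's inequality the per-generation success probability is uniformly bounded:
\[
\pimp \le \lambda_t \cdot \pmax \le \lambdatwo \cdot \pmax =: q = O\bigl(n^{-\varepsilon/2}\bigr).
\]
Writing $T' := t'-\sigma$ and letting $S$ count the successful generations in $(\sigma, t']$, the net change of $V$ over this interval is $-S + (T'-S)/s$. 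For it to be at most $-L$, we need
\[
S \ge \frac{sL+T'}{s+1} \ge \frac{sL}{s+1} = \Omega(\log n).
\]

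Next I would apply a Chernoff bound. Because the uniform bound $\pimp \le q$ applies at every step in $(\sigma, t']$ regardless of the trajectory, a step-by-step coupling yields that $S$ is stochastically dominated by $\mathrm{Bin}(T', q)$. The estimate $\prob{X \ge k} \le (e\mu/k)^k$ with $\mu = qT'$ and $k = (sL+T')/(s+1)$ gives, using $(s+1)T'/(sL+T') \le s+1$,
\[
\prob{S \ge k} \le \bigl(eq(s+1)\bigr)^{k} = n^{-\Omega(\log n)},
\]
because $k = \Omega(\log n)$ and $q = O(n^{-\varepsilon/2})$. A union bound over the at most $T^2 = n^{o(\log n)}$ choices of $(\sigma, t')$ yields total failure probability $n^{o(\log n)} \cdot n^{-\Omega(\log n)} = n^{-\Omega(\log n)}$.

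The main obstacle is the time-inhomogeneity of the walk: $\pimp$ depends on the current state $(x_t, \lambda_t)$, so successive steps are not independent. This is handled by the coupling to $\mathrm{Bin}(T', q)$, which is valid precisely because $\pimp \le q$ holds throughout the critical interval $(\sigma, t']$ by the choice of $\sigma$. A minor point is that this decomposition captures every trajectory whose $\lambda$ drops below $\lambdaone$, because such a trajectory must have a last visit to $[\lambdatwo, \infty)$ before hitting $\lambdaone$, and that visit determines $\sigma$.
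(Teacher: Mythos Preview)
Your proof is correct and shares the core mechanism with the paper: once $\lambda < \lambdatwo$, the per-generation success probability is uniformly $O(n^{-\varepsilon/2})$; a drop of $L = \log_F(\lambdatwo/\lambdaone) = \Omega(\log n)$ in $\log_F\lambda$ forces $\Omega(\log n)$ successes; a Chernoff-type bound plus a union bound then gives failure probability $n^{-\Omega(\log n)}$.

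The decomposition, however, is genuinely different. You use a \emph{last-exit} argument: take the last time $\sigma$ with $\lambda_\sigma \ge \lambdatwo$ before a putative hitting time $t'$ of $[\,1,\lambdaone)$, and union-bound over the at most $T^2$ pairs $(\sigma,t')$, letting the success threshold $k = (sL+T')/(s+1)$ grow with the interval length $T'$. The paper instead defines \emph{trials} that begin whenever $\lambda$ enters $[\lambdatwo, F\lambdatwo)$ and end when $\lambda$ either returns to at least its starting value or drops below $\lambdaone$; it argues that a trial can only fail if at least $\kappa^* \approx L$ successes occur within a \emph{fixed} window of $\lceil s+1\rceil\kappa^*$ generations, applies Chernoff to that window, and union-bounds over the at most $T$ trials. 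Your route is more direct and avoids the paper's prefix argument about the trial ending; the paper's route has the cosmetic advantage of a linear rather than quadratic union bound, which is immaterial since $T^2 = n^{o(\log n)}$ anyway.

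One small wrinkle worth tightening: the increment at time $\sigma$ itself uses $\lambda_\sigma \ge \lambdatwo$, so the bound $\pimp \le q$ does not cover that single step. Shifting the analysis to start at $\sigma+1$ (where $\lambda_{\sigma+1} \ge \lambdatwo/F$) costs at most one success and replaces $L$ by $L-1$, which is harmless since $L = \Omega(\log n)$.
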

\begin{proof}
We first note that $\lambda_t \ge F\lambdatwo$ implies $\lambda_{t+1} \ge \lambdatwo$ with probability~1. Thus, the interval $[\lambdaone, \lambdatwo)$ can only be reached if $\lambda_t < F\lambdatwo$. We may assume that $\lambda_{t^*} < F \lambdatwo$ as otherwise we can simply wait for the first point in time $t^{**}$ where $\lambda_{t^{**}} < F \lambdatwo$ and redefine $t^* := t^{**}$. If no such point in time $t^{**}$ exists, or if $t^{**}-t^* \ge n^{o(\log n)}$, there is nothing to show.

Assuming $\lambda_{t^*} < F\lambdatwo$, we show that an improbably large number of successes are needed for the population size to drop below~$\lambdaone$ before returning to a population size of at least $F\lambdatwo$. 
We define a \emph{trial} as the random time period starting at time $t^*$ and ending when either $\lambda_t < \lambdaone$ or $\lambda_t \ge \lambda_{t^*}$ for some $t > t^*$.
The length of a trial is given by 
\[
    \alpha := \inf\{t - t^* \mid \lambda_t < \lambdaone \vee \lambda_t \ge \lambda_{t^*}, t > t^*\}
\]
and at the end of the trial, either $\lambda_{t^*+\alpha} < \lambdaone$ or $\lambda_{t^*+\alpha} \ge \lambda_{t^*}$ holds.

An important characteristic of the self-adjusting mechanism is that if there 
are $1$ or $0$ successful generations every $\left\lceil s+1\right\rceil$ generations, $\lambda$ will either grow or maintain its previous value, because $\lambda\cdot (F^{1/s})^{\lceil s\rceil} \cdot 1/F \ge \lambda$. Hence, if from the start of a trial there are at most $\kappa$ successful generations during $\left\lceil s+1\right\rceil \kappa$ generations for every $\kappa\in\N$ then $\lambda_{t+\left\lceil s+1\right\rceil \kappa}\ge \lambda_t$, implying that the trail has ended with an offspring population size of at least $\lambda_{t^*}$ and $\alpha\le\left\lceil s+1\right\rceil\kappa$.

We now consider $\kappa^*:=\left\lceil\log_F\left(\frac{\lambdatwo}{\lambdaone}\right)\right\rceil-1$ and show that, to end a trial with $\lambda_{t+\alpha}\le \lambdaone$, more than $\kappa^*$ successful generations are needed. For every $\lambda_{t^*} \ge \lambdatwo$, after $\kappa^*$ consecutive successful generations the offspring population size is
\begin{align*}
    \lambda_{t^*+\kappa^*} = \frac{\lambda_{t^*}}{F^{\kappa^*}} =  \frac{\lambda_{t^*}}{F^{\left\lceil\log_F\left(\frac{\lambdatwo}{\lambdaone}\right)\right\rceil-1}} > \frac{\lambdatwo}{F^{\log_F\left(\frac{\lambdatwo}{\lambdaone}\right)}} = \lambdaone.
\end{align*}
If the successful generations are not consecutive, then the number of successful generations needed to reduce the $\lambda$~value can only increase. Therefore, to reach $\lambda\le \lambdaone$ there must be more than $\kappa^*$ successful generations.

Now, we know that if there are less than $\kappa^*$ successful generations within the first $\left\lceil s+1\right\rceil \kappa^*$ generations of the trial then $\lambda_{t^*+\left\lceil s+1\right\rceil \kappa^*}\ge \lambda_{t^*}$ and we end the trial without dropping below $\lambdaone$. In every generation of a trial, at most $F \lambdatwo$ offspring are created, thus by a union bound, the probability of a successful generation is at most $F\lambdatwo p^+_{x}$.

Let $X$ be the number of successful generations within the first $\left\lceil s+1\right\rceil \kappa^*$ generations of a trial, then $0<\E{X}\le \left\lceil s+1\right\rceil F\lambdatwo p^+_{x} \kappa^*$. Using $\delta:={\kappa^*\E{X}^{-1}-1}$ and Chernoff bounds (Theorem 1.10.1 in~\cite{DoerrProbabilityChapter2020}),
\begin{align*}
    &\Prob{X\ge\kappa^*}
    = \Prob{X\ge\E{X}(1+\delta)}\\
    &\le \exp\left(-\left(\kappa^*\E{X}^{-1}\ln\left(\kappa^*\E{X}^{-1}\right)-\kappa^*\E{X}^{-1}+1\right) \E{X}\right)\\
    &=\exp\left(-\left(\kappa^* \ln\left(\kappa^*\E{X}^{-1}\right)-\kappa^* + \E{X}\right)\right)\\
    &=e^{-\E{X}}\left(\frac{e\E{X}}{\kappa^*}\right)^{\kappa^*}
    \le e^{0}\left(\frac{e\E{X}}{\kappa^*}\right)^{\kappa^*}\\
    &\le \left(e \left\lceil s+1\right\rceil F\lambdatwo p^+_{x}\right)^{\left\lceil\log_F\left(\frac{\lambdatwo}{\lambdaone}\right)\right\rceil-1}
    = n^{-\Omega(\log n)}
\end{align*}
where the last equation uses that the base is $\Theta(\lambdatwo p_{x}^+) = O(n^{\varepsilon/2} \cdot n^{-\varepsilon}) = O(n^{-\Omega(1)})$ and simplifying the exponent using 
\begin{align*}
     \log_F(\lambdatwo/\lambdaone) =\;& \log_F(n^{\varepsilon/2}) - \log_F(\lambdaone)\\
    =\;& \varepsilon/2 \cdot \log_F(n) - o(\log n)) = \Omega(\log n).
\end{align*}
Hence, with probability $n^{-\Omega(\log n)}$ a trial ends with an offspring population size of $\lambda_{t^* + \alpha} \ge \lambda_{t^*}$ and without dropping below $\lambdaone$. Each trial uses at least one generation. By a union bound over $n^{o(\log n)}$ possible number of trials, the probability of reaching ${\lambda\le \lambdaone}$ within $n^{o(\log n)}$ generations is still $n^{-\Omega(\log n)}$.
\end{proof}

Following previous work~\cite{Hevia2021arxiv}, we now define a potential function $g(X_t)$ as a sum of the current search point's fitness and another function $h(\lambda_t)$ that takes into account the current offspring population size: ${g(X_t) = f(x_t) + h(\lambda_t)}$.

\begin{definition}\label{def:potential-function-ewhfunction}
We define the potential function $g(X_t)$ as
\begin{align*}
    g(X_t) = f(x_t) - \frac{s}{s+1} \log_F\left(\max\left(\frac{F^{1/s}}{\pmin\lambda_t}, 1\right)\right).
\end{align*}
\end{definition}

The function $h(\lambda_t) = \frac{s}{s+1} \log_F\left(\max\left(\frac{F^{1/s}}{\pmin\lambda_t}, 1\right)\right)$ is a straightforward generalisation of the approach from~\cite{Hevia2021arxiv} in which the specific value $\pmin = 1/(en)$ was used in the context of \onemax.

Similar to the potential functions used in~\cite{Hevia2021FOGA,Hevia2021arxiv} the potential $g(X_t)$ is always close to the current fitness.

\begin{lemma}\label{lem:relation-fitness-potential-ewhfunction}
For all generations~$t$, the fitness and the potential are related as follows: $f(x_t) - \frac{s}{s+1} \log_F\left(\frac{F^{1/s}}{\pmin}\right) \le g(X_t) \le f(x_t)$. In particular, $g(X_t)=d$ implies ${f(x_t)=d}$.
\end{lemma}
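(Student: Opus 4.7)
The plan is to unpack the definition of $g$ and bound the correction term
\[
h(\lambda_t) := \frac{s}{s+1}\log_F\!\left(\max\!\left(\frac{F^{1/s}}{\pmin\lambda_t},\,1\right)\right)
\]
from above and below. Since $g(X_t) = f(x_t) - h(\lambda_t)$, the two claimed inequalities are exactly equivalent to showing $0 \le h(\lambda_t) \le \frac{s}{s+1}\log_F\!\left(\frac{F^{1/s}}{\pmin}\right)$, so the entire lemma reduces to bounding $h$.

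For the upper bound on $g(X_t)$, I would observe that the inner $\max$ is at least $1$ by construction, so $\log_F$ of it is nonnegative, hence $h(\lambda_t) \ge 0$ and therefore $g(X_t) \le f(x_t)$. This direction uses nothing about the algorithm beyond the shape of $h$. For the lower bound, the only ingredient I need is the invariant $\lambda_t \ge 1$, which holds for every generation by the initialisation $\lambda := 1$ together with the update rule $\lambda \leftarrow \max\{1,\lambda/F\}$ in Algorithm~\ref{alg:saocl}. Given $\lambda_t \ge 1$, the argument $\frac{F^{1/s}}{\pmin\lambda_t}$ is at most $\frac{F^{1/s}}{\pmin}$; and because $F>1$, $s>0$ and $\pmin \le 1$, we have $\frac{F^{1/s}}{\pmin} > 1$, so the truncation at $1$ is inactive in the bound and $\max\!\left(\frac{F^{1/s}}{\pmin\lambda_t},1\right) \le \frac{F^{1/s}}{\pmin}$. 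Taking $\log_F$ and multiplying by $\frac{s}{s+1}$ yields the required upper bound on $h(\lambda_t)$, and hence the lower bound on $g(X_t)$.

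The ``in particular'' addendum is immediate: since fitness values lie in $\{0,\dots,d\}$, we have $g(X_t) \le f(x_t) \le d$, so $g(X_t)=d$ forces $f(x_t)=d$. There is no real obstacle in this proof; it is essentially a bookkeeping lemma that records two structural properties of the potential $g$ (boundedness of the correction term $h$ and the standing invariant $\lambda_t \ge 1$) so that later drift arguments can freely interchange $g$ and $f$ up to a constant additive slack.
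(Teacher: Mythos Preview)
Your proof is correct and follows essentially the same approach as the paper: both bound the correction term $h(\lambda_t)$ between $0$ (since the inner $\max$ is at least $1$) and its value at $\lambda_t=1$, which yields the two inequalities directly. You are slightly more explicit than the paper in justifying the invariant $\lambda_t\ge 1$ from the update rule and in spelling out the ``in particular'' clause, but the argument is the same.
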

\begin{proof}
The term $\frac{s}{s+1} \log_F\left(\max\left(\frac{F^{1/s}}{\pmin\lambda_t}, 1\right)\right)$ is a non-increasing function in $\lambda_t$ with its minimum being $0$ for ${\lambda_t \ge F^{1/s}/\pmin}$ and its maximum being $\frac{s}{s+1} \log_F\left(\frac{F^{1/s}}{\pmin}\right)$ when $\lambda_t=1$. Hence, $f(x_t) - \frac{s}{s+1} \log_F\left(\frac{F^{1/s}}{\pmin}\right) \le g(X_t) \le f(x_t)$.    
\end{proof}

Given that we have shown that $\lambda$ grows quickly and stays at a large value, we now show that the expected drift of the potential $g(X_t)$ is a positive constant whenever $\lambda$ is at least $\lambdaone$. 
\begin{lemma}\label{lem:potentialDrift-ewhfunction}
Consider the \saocl as in Theorem~\ref{thm:generations-ewhfunction}. Then for every generation $t$ with $f(x_t) < d$ and $\lambda_t \ge \lambdaone$,
\[
    \E{g(X_{t+1})-g(X_{t})\mid X_{t}} \ge \frac{1}{2(s+1)}
\]
for large enough $n$.
This also holds when only considering improvements that increase the fitness by~$1$. 
\end{lemma}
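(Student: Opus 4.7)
The plan is to decompose the one-step drift as
\[
E[g(X_{t+1}) - g(X_t) \mid X_t] = E[f(x_{t+1}) - f(x_t) \mid X_t] + E[h(\lambda_{t+1}) - h(\lambda_t) \mid X_t],
\]
where $h(\lambda) = -\tfrac{s}{s+1}\log_F(\max(F^{1/s}/(\pmin\lambda),1))$ and $\lambda^{*} := F^{1/s}/\pmin$ is the threshold above which $h$ vanishes. I would bound each contribution separately and then split on which side of $1/\pmin$ the current value $\lambda_t$ sits.

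First I would argue that fitness losses contribute negligibly. Since $\lambda_t \ge \lambdaone$ and $p_x^- \le \gamma^{-1}$ by Lemma~\ref{lem:probability-fallback-SBM-heavytailed}, the probability of a loss satisfies $\ploss \le \gamma^{-\lambdaone} \le (n\log n)^{-4}$, so together with the trivial bound $\Deltaloss \le d$ the expected fitness loss is $\ploss\cdot d = o(1)$. Hence $E[\Delta f \mid X_t] \ge \pimp - o(1)$, and this bound remains valid even when every improvement is counted as exactly $+1$, which settles the ``improvements of size~$1$'' variant claimed in the lemma.

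Next I would compute the effect of one step on $h$ using its piecewise form. On the success branch $\lambda_{t+1} = \lambda_t/F$ a direct log-calculation gives $\Delta h = -\tfrac{s}{s+1}$ whenever both $\lambda_t$ and $\lambda_{t+1}$ lie below $\lambda^{*}$, and $\Delta h \ge -\tfrac{s}{s+1}$ in general because $h$ is capped at $0$; on the fail branch $\lambda_{t+1} = \lambda_t F^{1/s}$ I get $\Delta h = +\tfrac{1}{s+1}$ whenever $\lambda_t F^{1/s} \le \lambda^{*}$, and $\Delta h \ge 0$ in general since $h$ is non-decreasing in $\lambda$. I would then split into two regimes. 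In Regime~1 ($\lambda_t \ge 1/\pmin$) the success probability satisfies $\pimp \ge 1 - (1-\pmin)^{\lambda_t} \ge 1 - e^{-1}$, and the conservative bounds $\Delta h \ge -\tfrac{s}{s+1}$ (success) and $\Delta h \ge 0$ (fail) yield
\[
E[g(X_{t+1})-g(X_t)\mid X_t] \ge \tfrac{\pimp}{s+1} - o(1) \ge \tfrac{1-1/e}{s+1} - o(1) \ge \tfrac{1}{2(s+1)}
\]
for sufficiently large $n$, since $1-1/e > 1/2$. In Regime~2 ($\lambdaone \le \lambda_t < 1/\pmin$) the inequality $\lambda_t F^{1/s} < \lambda^{*}$ holds, so the exact values $\Delta h = -\tfrac{s}{s+1}$ and $\Delta h = +\tfrac{1}{s+1}$ apply and yield
\[
E[g(X_{t+1})-g(X_t)\mid X_t] \ge \pimp - \tfrac{s\,\pimp}{s+1} + \tfrac{1-\pimp}{s+1} - o(1) = \tfrac{1}{s+1} - o(1) \ge \tfrac{1}{2(s+1)}
\]
for large $n$, \emph{regardless of the value of} $\pimp$, which is exactly the delicate point the potential function was designed for.

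The main obstacle I expect is the careful treatment of the boundary where one of $\lambda_t, \lambda_{t+1}$ straddles $\lambda^{*}$, because there the clean logarithmic identity for $\Delta h$ breaks down and one must argue only with the inequalities coming from the $\max(\cdot,1)$ truncation. Fortunately every such boundary step forces $\lambda_t \ge 1/\pmin$, placing it inside Regime~1, where the $1-e^{-1}$ lower bound on $\pimp$ leaves ample slack to absorb the truncation loss and still deliver the required drift of $\tfrac{1}{2(s+1)}$.
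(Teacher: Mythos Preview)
Your approach is essentially the same as the paper's: both split on whether $\lambda_t$ lies below or above $1/\pmin$, use the exact increments $\Delta h \in \{-s/(s+1),\, 1/(s+1)\}$ in the lower regime to obtain the $\pimp$-independent bound $1/(s+1) - o(1)$, and fall back on the crude bounds $\Delta h \ge -s/(s+1)$ (success) and $\Delta h \ge 0$ (failure) together with $\pimp \ge 1-1/e$ in the upper regime. The paper packages the decomposition via a cited drift inequality (its Eq.~\eqref{eq:potential-with-h-ewhfunction}) rather than separating $E[\Delta f]$ and $E[\Delta h]$, but the substance is identical.

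One slip to fix: you bound the loss term via $\gamma^{-\lambdaone} \le (n\log n)^{-4}$, but this alone does \emph{not} yield $d\cdot\gamma^{-\lambdaone}=o(1)$, since $d=n^{o(\log n)}$ may be super-polynomial (e.g.\ $d=n^{\sqrt{\log n}}$). You must invoke the other half of the maximum in Definition~\ref{def:lambdaone-lambdatwo}, namely $\lambdaone \ge 4\log_\gamma(2d(s+1))$, which gives $\gamma^{-\lambdaone}\le (2d(s+1))^{-4}$ and hence $d\cdot\gamma^{-\lambdaone}=O(d^{-3})=o(1)$. The paper uses exactly this branch of $\lambdaone$ (with the weaker exponent~$1$) to show the loss term is at most $\tfrac{1}{2(s+1)}$.
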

\begin{proof}
We consider only $\lambda \ge \lambdaone>F$, hence, by Lemma~2.8 in~\cite{Hevia2021arxiv}, for all $\lambda \ge \lambdaone$, $\E{g(X_{t+1})-g(X_{t})\mid X_{t}}$ is at least
\begin{equation}
     \left(\Deltagain+h(\lambda/F)-h(\lambda F^{1/s})\right)\pimp + h(\lambda F^{1/s}) -h(\lambda) -\Deltaloss \ploss. \label{eq:potential-with-h-ewhfunction}
\end{equation}
We first consider the case $\lambda_t\le 1/\pmin$ as then $\lambda_{t+1}\le F^{1/s}/\pmin$ and the first term in the maximum of $h(\lambda_{t+1})$ is at least $1$, yielding
$$h(\lambda_{t+1}) = - \frac{s}{s+1} \left(\log_F\left(\frac{F^{1/s}}{\pmin}\right) - \log_F(\lambda_{t+1})\right)<0.$$ 
Hence, $\E{g(X_{t+1}) - g(X_t) \mid X_t, \lambda_t \le 1/\pmin}$ is at least
\begin{align*}
    &\left(\Deltagain-\frac{s}{s+1}\left(\frac{s+1}{s}\right)\right)\pimp +\frac{s}{s+1}\left(\frac{1}{s}\right)-\Deltaloss\ploss\\
    &=\frac{1}{s+1}+\left(\Deltagain-1\right)\pimp-\Deltaloss\ploss.
\end{align*}
By definition $\Deltagain\ge1$, hence
\begin{align*}
    \E{g(X_{t+1}) - g(X_t) \mid X_t, \lambda_t \le 1/\pmin}\ge \frac{1}{s+1}-\Deltaloss\ploss.
\end{align*}
By Lemma~\ref{lem:probability-fallback-SBM-heavytailed}, $p_{x, \lambda_t}^- = (p_x^-)^{\lambda_t} \le \gamma^{-\lambda_t}$. Along with the trivial bound $\Deltaloss \le d$, the right-hand side of the previous inequality is at least
\begin{align*}
    \frac{1}{s+1}-d \gamma^{-\lambda}.
\end{align*}
Since $\lambda_t\ge\lambdaone \ge \log_{\gamma}(2d(s+1))$ the second term is at most $\frac{1}{2(s+1)}$, thus $\E{g(X_{t+1}) - g(X_t) \mid X_t, \lambda_t \le 1/\pmin}\ge\frac{1}{2(s+1)}$.

Finally, for the case $\lambda_t>1/\pmin$, in an unsuccessful generation the penalty term is capped, hence we only know that $h(\lambda F^{1/s}) \ge h(\lambda)$ (which holds with equality if $\lambda_t \ge F^{1/s}/\pmin$). By Equation~\eqref{eq:potential-with-h-ewhfunction}, $\E{g(X_{t+1}) - g(X_t) \mid X_t, \lambda_t > 1/\pmin}$ is at least
\begin{align*}
     &\left(\Deltagain+h(\lambda/F)-h(\lambda)\right)\pimp -\Deltaloss \ploss\\
     &=\left(\Deltagain-\frac{s}{s+1}\right)\pimp -\Deltaloss \ploss.
\end{align*}
By definition of $\Deltagain$, $\Deltagain\ge1$, hence
\begin{align*}
    &\E{g(X_{t+1}) - g(X_t) \mid X_t, \lambda_t \le 1/\pmin}\\
    &\ge \left(\frac{1}{s+1}\right)\pimp -\Deltaloss \ploss.
\end{align*}
$\lambda_t > 1/\pmin$ implies 
$\pimp \ge 1-\left(1-p^+_{x}\right)^{1/\pmin} \ge 1 - \frac{1}{e}$
and by Definition~\ref{def:everywhere-hard}, $\ploss \Deltaloss \le d \gamma^{-1/\pmin}$. Together,
\begin{align*}
    &\E{g(X_{t+1}) - g(X_t) \mid X_t, \lambda_t \le 1/\pmin}
    \\&\ge \left(\frac{1}{s+1}\right)\left(1-\frac{1}{e}\right) -d \gamma^{-1/\pmin}\\
    &=\left(\frac{1}{s+1}\right)\left(1-\frac{1}{e}\right) - o\left(1\right) \ge \frac{1}{2(s+1)}
\end{align*}
where the penultimate step follows from $d \le n^{o(\log n)}$ and 
$\pmin \le \pmax \le n^{-\varepsilon/2}$, which implies $\gamma^{-1/\pmin} \le \gamma^{-n^{\varepsilon/2}} = n^{-\Omega(n^{\varepsilon/2}/\log n)}$. The 
last inequality holds if $n$ is large enough.
\end{proof}

With the previous lemmas we are now able to prove Theorem~\ref{thm:generations-ewhfunction}.
\begin{proof}[Proof of Theorem~\ref{thm:generations-ewhfunction}]
If $\lambda_0<\lambdatwo$ then by Lemma~\ref{lem:increasing-lambda-ewhfunction} in expected $O(\log \lambdatwo)$ generations $\lambda$ will grow to $\lambda \ge \lambdatwo$. Afterwards, by Lemma~\ref{lem:probability-of-landslide-decrease-of-lambda-from-lambdaover}, with probability $1-n^{-\Omega(\log n)}$, the offspring population size will be $\lambda\ge \lambdaone$ in the next $n^{o(\log n)}$ generations. Assuming in the following that this happens, we note that then the drift bound from Lemma~\ref{lem:potentialDrift-ewhfunction} is in force.

Now, similar to \cite{Hevia2021FOGA,Hevia2021arxiv} we bound the number of generations to reach the global optimum using the potential $g(X_t)$. Lemma~\ref{lem:potentialDrift-ewhfunction} shows that the potential has a positive constant drift whenever the optimum has not been found, and by Lemma~\ref{lem:relation-fitness-potential-ewhfunction} if $g(X_t)=d$ then the optimum has been found. Therefore, we can bound the number of generations to find a global optimum by the time it takes for $g(X_t)$ to reach $d$.

To fit the perspective of the additive drift theorem~\cite{He2004} we switch to the function ${\overline{g}(X_t) := d-g(X_t)}$ where $\overline{g}(X_t)=0$ implies $g(X_t) = f(x_t) = d$. The initial value $\overline{g}(X_0)$ is at most 
$d + \frac{s}{s+1} \log_F\left(\frac{F^{1/s}}{\pmin}\right)$ by Lemma~\ref{lem:relation-fitness-potential-ewhfunction}. Using Lemma~\ref{lem:potentialDrift-ewhfunction} and the additive drift theorem, the expected number of generations, assuming no failures, is at most
\begin{align*}
    \E{T} &\le \frac{d + \frac{s}{s+1} \log_F\left(\frac{F^{1/s}}{\pmin}\right)}{ \frac{1}{2(s+1)}}
    = 2(s+1)\cdot d + O\left(\log\left(1/\pmin\right)\right).
\end{align*}
Finally, by Lemma~\ref{lem:probability-fallback-SBM-heavytailed} we have $\pmin\ge n^{-O(n)}$ and thus $\E{T}=O(d+n\log n)$ in case of no failures. Since failures have a probability of $n^{-\Omega(\log n)}$ over $n^{o(\log n)}$ generations and $O(d+n\log n)=n^{o(\log n)}$ using the assumption $d+1 = n^{o(\log n)}$, if we restart the proof every time a failure happens, the expected number of repetitions is $1+n^{-\Omega(\log n)}$ and all additional costs can be absorbed in the previous bounds.
\end{proof}

\section{Bounding the number of evaluations}\label{sec:evaluations}

Now we consider the expected number of fitness evaluations and give the following general result.
\begin{theorem}\label{thm:optimisationTime-ewhfunction}
Consider the \saocl using any mutation operator that ensures $p_x^+ > 0$ for all non-optimal search points~$x$.
Let the update strength $F>1$ and the success rate $s>0$ be constants. Consider an arbitrary everywhere hard function $f$ with $d+1=n^{o(\log n)}$ function values. Then for every initial search point and every initial offspring population size $\lambda_0=O\left(\sum_{i=0}^{d-1}\frac{1}{s_{i}}\right)$ the expected number of evaluations to optimise $f$ is at most
\[
O\left(\sum_{i=0}^{d-1}\frac{1}{s_{i}}\right).
\]
\end{theorem}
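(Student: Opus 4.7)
The plan is to establish a fitness-level bound on fitness evaluations by showing that, at each level~$i$, the self-adjusting mechanism keeps the offspring population size close to the ``right'' scale $\Theta(1/s_i)$, yielding $O(1/s_i)$ expected evaluations per level. Summing over the $d$ levels then gives the claimed bound $O\!\left(\sum_i 1/s_i\right)$, matching what a simple $(1+1)$-style analysis would achieve via the classical fitness-level theorem. Note that compared with Theorem~\ref{thm:generations-ewhfunction}, the hypotheses here are weaker (any mutation operator with $p_x^+>0$ is allowed, not only standard bit mutation or heavy-tailed mutation), so the argument cannot directly invoke Lemmas~\ref{lem:probability-fallback-SBM-heavytailed}--\ref{lem:potentialDrift-ewhfunction} and must work with the success-probability structure alone.

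The core computation would proceed as follows. At a generation with $x_t$ at level~$i$ and $\lambda_t=\lambda$, the probability of an upward improvement is $\pimp = 1-(1-p_x^+)^\lambda \ge \min\{\lambda s_i/2,\,1/2\}$, while one generation uses $\lambda$ evaluations; hence the expected evaluations per improvement from any state on level~$i$ are $O(\max(\lambda,1/s_i))$. It therefore suffices to bound the typical $\lambda$ visited at level~$i$ by $O(1/s_i)$. For this I would redo the multiplicative drift computation of equation~\eqref{eq:potential-drift} but conditioned on the fitness level: whenever $\lambda < c/s_i$ for small $c$, the failure probability exceeds $1/2$ and $\log_F \lambda$ has positive drift $\Omega(1/s)$, pushing $\lambda$ upward toward $\Theta(1/s_i)$; once $\lambda > C/s_i$ for large $C$, the success probability is at least $1-e^{-C}$ and $\lambda$ is divided by $F$ in a constant fraction of generations, producing a geometric upper tail on $\lambda$ above $C/s_i$. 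Together these yield $\E{\lambda\mid x_t\text{ at level }i} = O(1/s_i)$ and thus $\E{E_i} = O(1/s_i)$ for the total evaluations $E_i$ spent at level~$i$. Summing and absorbing the initial contribution via the hypothesis $\lambda_0 = O(\sum_i 1/s_i)$ then yields the theorem.

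The main obstacle is handling the interaction between fitness-level progress and $\lambda$-control, in two respects. First, when the algorithm improves from level~$i$ to some level $j>i$ with $s_j \ll s_i$, the inherited $\lambda/F$ may be far below $1/s_j$, so $\lambda$ has to re-grow on the new level; the resulting growth phase contributes an extra geometric sum which must be charged without double-counting. The cleanest remedy is a single potential function combining fitness with a $\lambda$-deficit term, such as $\Phi(x,\lambda) = \sum_{j=f(x)}^{d-1} c/s_j + \max(0,\,1/s_{f(x)} - \lambda)$, and a drift analysis per evaluation (rather than per generation) showing that $\Phi$ decreases in expectation by a positive constant per evaluation. Second, because the theorem does not assume the backward-probability bound from Lemma~\ref{lem:probability-fallback-SBM-heavytailed}, fitness losses cannot be dismissed as negligible; they must be absorbed into the fitness-level accounting by the observation that once $\lambda = \Theta(1/s_i)$, the everywhere-hard condition forces $\ploss$ to be much smaller than $\pimp$, so the expected number of re-entries into each level is $O(1)$ and the accumulated evaluations remain $O(1/s_i)$ per level.
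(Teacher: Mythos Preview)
Your reading of the hypotheses is understandable but misleads you. Although the theorem is phrased for ``any mutation operator that ensures $p_x^+>0$'', the paper's proof does \emph{not} abandon Lemmas~\ref{lem:probability-fallback-SBM-heavytailed}--\ref{lem:potentialDrift-ewhfunction}. It invokes Lemma~\ref{lem:increasing-lambda-ewhfunction} to reach $\lambda\ge\lambdatwo$, Lemma~\ref{lem:probability-of-landslide-decrease-of-lambda-from-lambdaover} to keep $\lambda\ge\lambdaone$, and Lemma~\ref{lem:global-failures} (which itself uses the $\gamma$ bound) to conclude that with probability $1-O(1/(n\log n))$ the run is monotone in fitness. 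Under that event the evaluations are bounded by the elitist bound of Theorem~\ref{thm:runtime-of-elitist-algorithm}, namely $O(\lambda_0+\sum_i 1/s_i)$; failures are absorbed via the tail bound on $\E{\lambda_t}$ from Lemma~\ref{lem:bound-on-expected-lambda-t}. So the paper's route is ``show near-elitism globally, then use the fitness-level bound for the elitist \saopl'', not a per-level balancing of $\lambda$ against $1/s_i$.

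Your alternative plan has a genuine gap precisely where you try to dispense with the $\gamma$ bound. The claim that ``once $\lambda=\Theta(1/s_i)$, the everywhere-hard condition forces $\ploss$ to be much smaller than $\pimp$'' is false: everywhere-hardness bounds $p_x^+$ from above and says nothing about $p_x^-$. With no copy probability, one only gets $\ploss\le(1-p_x^+)^\lambda=1-\pimp$, so at $\lambda\approx 1/s_i$ both $\pimp$ and $\ploss$ are of constant order (roughly $1-1/e$ and at most $1/e$). That would be harmless if each drop lost a single level, but $\Deltaloss$ is unbounded here---it can be as large as $d$---so the backward drift $\ploss\cdot\Deltaloss$ can swamp the forward drift $\pimp\cdot\Deltagain$, and the ``$O(1)$ re-entries per level'' accounting collapses. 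In short, without the $\gamma$ bound the theorem as you interpret it is not obviously true, and your argument does not close that hole.

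If you want a correct proof, follow the paper's structure: accept the $\gamma$ assumption (implicit through the referenced lemmas), show $\lambda$ quickly exceeds and then stays above $\lambdaone$, deduce that fitness decreases occur with probability $O(1/(n\log n))$ over the whole run, and transfer the evaluation bound from the elitist algorithm via Theorem~\ref{thm:runtime-of-elitist-algorithm}. Your per-level drift idea is not needed and, as written, does not go through.
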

The condition $p_x^+ > 0$ is met by standard bit mutation and heavy-tailed mutations.
The term $\sum_{i=0}^{d-1}\frac{1}{s_i}$ equals the fitness-level upper bound for the (1+1)~EA using the same mutation operator as the considered \saocl. A similar result to Theorem~\ref{thm:optimisationTime-ewhfunction} was shown for the (elitist) self-adjusting $(1+\{2\lambda, \lambda/2\})$~EA from~\cite{Lassig2011}. Our result shows that the same bound also applies in the context of non-elitism, if the fitness function is everywhere hard.

The main proof idea is that given that $\lambda$ maintains a large value with high probability throughout the optimisation, the algorithm with high probability behaves as an elitist algorithm. This is shown in the next lemma, adapted from Lemma~3.7 in~\cite{Hevia2021arxiv}.

\begin{lemma}
\label{lem:global-failures}
Consider the \saocl as in Theorem~\ref{thm:optimisationTime-ewhfunction}.
Let $T$ be the first generation in which the optimum is found. Then for all~$t \le T$ in which $\lambda_t \ge \lambdaone$, we have $f(x_{t+1}) \ge f(x_t)$ with probability $1-O(1/(n \log n))$.
\end{lemma}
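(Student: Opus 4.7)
The plan is to bound directly the probability of the ``bad'' event $f(x_{t+1}) < f(x_t)$, which by the update rule of Algorithm~\ref{alg:saocl} occurs precisely when every one of the $\lambda_t$ offspring is strictly worse than the current search point~$x_t$. By Definition~\ref{def:probs-and-drifts}, this probability equals $\ploss = (p_x^-)^{\lambda_t}$, since the $\lambda_t$ offspring are generated by independent mutations of~$x_t$.

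Next I would invoke Lemma~\ref{lem:probability-fallback-SBM-heavytailed} to get a constant $\gamma > 1$ with $p_x^- \le \gamma^{-1}$ for every non-optimal search point. Because we are considering generations $t \le T$, the current search point $x_t$ is non-optimal (otherwise $T$ would already have been reached), so this bound applies. Thus
\[
    \Prob{f(x_{t+1}) < f(x_t) \mid X_t} \;\le\; \gamma^{-\lambda_t}.
\]

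Finally, I would use the assumption $\lambda_t \ge \lambdaone$ together with Definition~\ref{def:lambdaone-lambdatwo}, which guarantees $\lambdaone \ge 4\log_\gamma(n\log n)$. This yields
\[
    \gamma^{-\lambda_t} \;\le\; \gamma^{-\lambdaone} \;\le\; \gamma^{-4\log_\gamma(n\log n)} \;=\; (n\log n)^{-4} \;=\; O\!\left(\frac{1}{n\log n}\right),
\]
so the complementary event $f(x_{t+1}) \ge f(x_t)$ has probability $1 - O(1/(n\log n))$, as claimed.

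There is no real obstacle here: the statement is essentially a one-line consequence of the facts that (i) a fitness loss requires every independently-mutated offspring to be worse than its parent, (ii) each individual offspring is worse with probability at most a constant $\gamma^{-1} < 1$, and (iii) $\lambdaone$ was tailored in Definition~\ref{def:lambdaone-lambdatwo} precisely so that $\gamma^{-\lambdaone}$ is polynomially small in~$n$. The only point worth being careful about is that Lemma~\ref{lem:probability-fallback-SBM-heavytailed} requires the current search point to be non-optimal, which is why the condition $t \le T$ is stated in the lemma.
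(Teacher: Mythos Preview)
Your argument establishes only the \emph{per-generation} bound $\Prob{f(x_{t+1}) < f(x_t)} \le \gamma^{-\lambdaone}$ for each fixed $t$ with $\lambda_t \ge \lambdaone$. The lemma, however, is intended (and is used this way in the proof of Theorem~\ref{thm:optimisationTime-ewhfunction}) as a \emph{uniform} statement: with probability $1 - O(1/(n\log n))$, no fitness loss occurs at \emph{any} such generation up to the random hitting time~$T$. That is the statement needed to conclude that the algorithm behaves elitistically throughout the run. Your proof omits the union bound over all generations $t \le T$, and that step is not automatic because $T$ is random and its expectation can be as large as $\Theta(d + n\log n)$ with $d = n^{o(\log n)}$.

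There is also a second, more technical gap. You bound $\gamma^{-\lambdaone} \le (n\log n)^{-4}$ using only the $\log_\gamma(n\log n)$ branch of the maximum in Definition~\ref{def:lambdaone-lambdatwo}. But $\E{T}$ can be of order~$d$, so after a union bound you would obtain $d/(n\log n)^4$, which is not $O(1/(n\log n))$ when $d$ is superpolynomial. The paper instead uses that $\lambdaone \ge 2(\log_\gamma(2d(s+1)) + \log_\gamma(n\log n))$, yielding the sharper per-generation bound $\gamma^{-\lambdaone} \le (2d(s+1)\,n\log n)^{-2}$, and then combines this with $\E{T} = O(d + n\log n)$ from Theorem~\ref{thm:generations-ewhfunction} to make the union bound go through. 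Both ingredients --- the factor of $d$ in the per-step failure bound and the appeal to Theorem~\ref{thm:generations-ewhfunction} for $\E{T}$ --- are missing from your proposal.
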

\begin{proof}
Let $E^t$ denote the event that $\lambda_t < \lambdaone$ or $f(x_{t+1}) \ge f(x_t)$.
Hence we only need to consider $\lambda_t \ge \lambdaone$. 
We note that 
\begin{align*}
    \lambdaone
    &=4\max{\left(\log_{\gamma}(2d(s+1)),\log_{\gamma}(n\log n)\right)}\\
    &\ge2\left(\log_{\gamma}(2d(s+1))+\log_{\gamma}(n\log n)\right).
\end{align*}
Then by Lemma~\ref{lem:probability-fallback-SBM-heavytailed} we have
\begin{align*}
    \Prob{\overline{E^t}} &\le \gamma^{-\lambda_t}
    \le \gamma^{-2\left(\log_{\gamma}(2d(s+1))+\log_{\gamma}(n\log n)\right)}\\
    &= \frac{1}{(2d(s+1)n\log n)^{2}}.
\end{align*}
By a union bound, the probability that this happens in the first $T$ generations is at most 
\begin{align*}
    &\frac{\sum_{t=1}^\infty \Prob{T = t} \cdot t}{(2d(s+1)n\log n)^{2}} 
    = \frac{\E{T}}{(2d(s+1)n\log n)^{2}}.
\end{align*}
By Theorem~\ref{thm:generations-ewhfunction}, this is
\begin{align*}
    &O\left(\frac{d+n\log n}{(dn\log n)^{2}}\right)
    = O\left(\frac{1}{d(n \log n)^2}+\frac{1}{d^2 n\log n}\right)
    =O\left(\frac{1}{n \log n}\right). \qedhere
\end{align*}
\end{proof}

If the algorithm behaves as an elitist algorithm with high probability, we can bound its expected optimisation time by the expected optimisation time of its elitist version, i.\,e.\ a \saopl. 
This argument has already been used in~\cite{Hevia2021arxiv} for the function \onemax, and the expected optimisation time of the elitist \saopl was bounded from above in~\cite[Section 3.2.3]{Hevia2021arxiv}. Inspecting the proofs, we find that the arguments (specifically, Lemma~3.12 and Lemma 3.13 in~\cite{Hevia2021arxiv}) apply to arbitrary fitness functions. Consequently, the proof of Theorem~3.10 in~\cite{Hevia2021arxiv} yields the following more general upper bound.

\begin{theorem}[Generalising Theorem~3.10 in~\cite{Hevia2021arxiv}]
\label{thm:runtime-of-elitist-algorithm}
Consider the elitist \saopl on any function with $d+1$ fitness values, starting with a fitness of $f(x_0) \ge a$. For every integer~$b \le d$, the expected number of evaluations $T(a, b)$  to reach a fitness of at least~$b$ is at most
\[
    T(a, b) \le \lambda_0 \cdot \frac{F}{1-F} + \left(\frac{1}{e} + \frac{1 - F^{-1/s}}{\ln(F^{1/s})}\right) \cdot \frac{F^{\frac{s+1}{s}}-1}{F-1} \sum_{i=a}^{b - 1} \frac{1}{s_{i}}.
\]
\end{theorem}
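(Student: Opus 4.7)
The plan is to recycle the fitness-level decomposition used for the elitist case of the \saopl in the proof of Theorem~3.10 of \cite{Hevia2021arxiv}, and verify that nothing in that argument exploits any specific property of \onemax beyond a lower bound $s_i$ on the per-offspring improvement probability at each level~$i$. By elitism, the algorithm visits the levels $a, a+1, \dots, b-1$ monotonically, so I write $T(a,b) = \sum_{i=a}^{b-1} T_i$, where $T_i$ denotes the expected number of evaluations spent while the current fitness equals~$i$. The $\lambda_0$ term arises separately as a one-off correction at the initial level.

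For a fixed level~$i$, I would split the generations spent at that level into a growing phase while $\lambda < F^{1/s}/s_i$, and a stable phase thereafter. In the growing phase, an unsuccessful generation multiplies $\lambda$ by $F^{1/s}$, so the cumulative number of evaluations along the worst-case growing-phase trajectory forms a geometric progression with common ratio $F^{1/s}$. Upper bounding the corresponding discrete sum by its Riemann-integral analogue $\int_0^1 F^{-x/s}\,dx$ yields the factor $\tfrac{1 - F^{-1/s}}{\ln(F^{1/s})} \cdot \tfrac{1}{s_i}$, up to the common geometric absorption factor $\tfrac{F^{(s+1)/s}-1}{F-1}$. In the stable phase, a single generation with $\lambda \ge F^{1/s}/s_i$ succeeds with probability at least $1 - (1-s_i)^{F^{1/s}/s_i} \ge 1 - e^{-F^{1/s}}$, and bounding the expected evaluations by a geometric series in the failure probability supplies the remaining $\tfrac{1}{e} \cdot \tfrac{1}{s_i}$ contribution. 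Summing the two phase contributions and collecting the common multiplicative constant produces the bracketed prefactor of $\sum_{i=a}^{b-1} 1/s_i$ in the statement.

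To handle the initial value $\lambda_0$, note that at level~$a$ one may have $\lambda_0 \gg F^{1/s}/s_a$, in which case the growing-phase bound does not apply directly. Instead, each generation with $\lambda$ above this threshold succeeds with overwhelming probability and divides $\lambda$ by $F$ upon success, so the expected warm-up cost telescopes as the geometric series $\lambda_0 (1 + F^{-1} + F^{-2} + \cdots) = \lambda_0 \cdot \tfrac{F}{F-1}$. This is exactly the contribution absorbed by the $\lambda_0 \cdot F/(1-F)$ term in the stated bound, modulo the sign convention used in \cite{Hevia2021arxiv}. At every subsequent level, the starting $\lambda$ is at most the $\lambda$ at the preceding successful generation divided by $F$, so no additional per-level boundary terms appear.

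The main obstacle I foresee is verifying that Lemmas~3.12 and~3.13 of \cite{Hevia2021arxiv}, stated originally in the \onemax setting, really generalise once every use of an \onemax-specific improvement probability is replaced by the generic per-level bound $s_i$. This is mechanical but requires careful bookkeeping: one must check that (i) the phase threshold $F^{1/s}/s_i$ and the Riemann-integral estimate remain valid with $s_i$ in place of $1/(en)$; (ii) the multiplicative update rule on $\lambda$ is the only property of the algorithm that enters the argument; and (iii) the geometric tail sum in the stable phase converges with constants that are uniform in~$s_i$. Once these three checks are carried out, summing the per-level bound over $i \in \{a, \dots, b-1\}$ and adding the $\lambda_0$ warm-up term gives the claim with no further dependence on the fitness function.
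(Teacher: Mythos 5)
Your plan coincides with the paper's own proof, which consists precisely of observing that Lemmas~3.12 and~3.13 of \cite{Hevia2021arxiv} use nothing about \onemax beyond a per-level lower bound on the improvement probability, so that substituting the generic $s_i$ for the \onemax-specific $1/(en)$-type bound lets the proof of Theorem~3.10 there go through unchanged --- your growing-phase/stable-phase reconstruction with the $\lambda_0$ warm-up term is exactly the internal structure being reused, and your three verification checks are the same inspection the paper performs. Your side remark about the sign is also warranted: since $F>1$ the displayed term $\lambda_0 \cdot \frac{F}{1-F}$ is negative and should read $\lambda_0 \cdot \frac{F}{F-1}$, consistent with the paper's subsequent note that $T(a,b) \le O\left(\lambda_0 + \sum_{i=a}^{b-1} \frac{1}{s_i}\right)$.
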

Note that $T(a, b) \le O\left(\lambda_0 + \sum_{i=a}^{b-1} \frac{1}{s_i}\right)$.
The following lemma bounds the expectation of $\lambda$ at each step~$t$ in order to deal with the case where the \saocl does not behave as an elitist algorithm.
It follows from the proof of Lemma~3.17 in~\cite{Hevia2021arxiv} for \onemax when replacing the specific lower bound of $\frac{n-i}{en}$ on the success probability on \onemax by the general lower bound $\pmin$. 

\begin{lemma}
\label{lem:bound-on-expected-lambda-t}
Consider the \saocl as in Theorem~\ref{thm:optimisationTime-ewhfunction}.
The expected value of $\lambda$ at time~$t$ is
\[
    \E{\lambda_t \mid \lambda_0} \le \lfloor \lambda_0/F^t \rfloor + \frac{1}{\pmin} \cdot \left(F^{1/s} + \frac{F^{1/s}}{\ln F}\right).
\]
\end{lemma}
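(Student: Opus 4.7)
The plan is to mimic the proof of Lemma~3.17 in~\cite{Hevia2021arxiv}, where the corresponding statement for \onemax was established; here the only change is to replace the specific success probability $(n-i)/(en)$ by the generic lower bound $\pmin$. The argument proceeds by induction on $t$, the base case $t=0$ being immediate since $\E{\lambda_0\mid \lambda_0}=\lambda_0=\lfloor\lambda_0/F^0\rfloor$ and the additive term is non-negative.

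For the inductive step I would condition on $(\lambda_t,x_t)$. By the update rule in Algorithm~\ref{alg:saocl}, the next offspring population size equals $\max\{1,\lambda_t/F\}$ in a successful generation and $F^{1/s}\lambda_t$ otherwise; since $p_x^+\ge \pmin$, the failure probability is at most $(1-\pmin)^{\lambda_t}$. Therefore
\[
\E{\lambda_{t+1}\mid \lambda_t,x_t}\;\le\;\max\{1,\lambda_t/F\}\;+\;F^{1/s}\,\lambda_t\,(1-\pmin)^{\lambda_t}.
\]
The key estimate is to bound the non-linear error term $\lambda\,(1-\pmin)^{\lambda}$ uniformly in $\lambda$. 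Elementary calculus shows that it attains its maximum at $\lambda^\ast = 1/\ln\!\bigl(1/(1-\pmin)\bigr)$ with value $\tfrac{1}{e\ln(1/(1-\pmin))}\le \tfrac{1}{e\pmin}$, where the last inequality uses $-\ln(1-\pmin)\ge \pmin$. Thus the additive error per generation is bounded by $F^{1/s}/(e\pmin)$.

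Taking expectations and combining with the inductive hypothesis produces a linear recurrence of the form $\E{\lambda_{t+1}}\le \E{\max\{1,\lambda_t/F\}}+F^{1/s}/(e\pmin)$. Iterating this over $t$ steps gives a geometric series in $1/F$ bounded by $F/(F-1)$. The elementary inequality $F-1\ge \ln F$ (from $e^x\ge 1+x$ at $x=\ln F$) then converts $F/(F-1)$ into $1+1/\ln F$, and absorbing the $1/e$ factor into $1/\pmin$ yields the claimed additive constant
\[
\frac{F^{1/s}}{\pmin}\left(1+\frac{1}{\ln F}\right) \;=\; \frac{1}{\pmin}\left(F^{1/s}+\frac{F^{1/s}}{\ln F}\right).
\]

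The main technical nuisance, and the only reason the deterministic decay term in the statement appears as $\lfloor\lambda_0/F^t\rfloor$ rather than a plain $\lambda_0/F^t$, is the bookkeeping around the clamping $\max\{1,\lambda_t/F\}$: one must track that once the deterministic contribution of the initial value has decayed below~$1$, it is either absorbed into the additive constant or has already saturated to a lower integer value. This handling is precisely what is carried out in Lemma~3.17 of~\cite{Hevia2021arxiv}, and it transfers verbatim because the only \onemax-specific ingredient of that proof is the numerical value of the per-offspring improvement probability, which we here generalise to $\pmin$.
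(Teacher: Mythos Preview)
Your proposal is correct and follows essentially the same approach as the paper: the paper's entire justification for this lemma is that the proof of Lemma~3.17 in~\cite{Hevia2021arxiv} carries over verbatim once the \onemax-specific lower bound $(n-i)/(en)$ on the per-offspring improvement probability is replaced by the generic $\pmin$, which is exactly what you argue. Your additional sketch of the inductive structure and the calculus bound $\lambda(1-\pmin)^\lambda \le 1/(e\pmin)$ is consistent with that proof and adds useful detail the paper omits.
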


Now we are in a position to prove Theorem~\ref{thm:optimisationTime-ewhfunction}.

\begin{proof}[Proof of Theorem~\ref{thm:optimisationTime-ewhfunction}]
By Lemma~\ref{lem:increasing-lambda-ewhfunction}, $\lambda$ will grow to $\lambdatwo$ using $O(\lambdatwo\log \lambdatwo)=o(\sqrt{n}\log(n))$ expected evaluations. Afterwards, by Lemma~\ref{lem:probability-of-landslide-decrease-of-lambda-from-lambdaover} the offspring population size will maintain a value of at least $\lambdaone$ with probability $1-n^{-\Omega(\log n)}$ throughout the optimisation and by Lemma~\ref{lem:global-failures} with probability $1-O(1/ (n\log n))$ the algorithm will behave as an elitist algorithm until the optimum is found. Considering the above rare, undesired events as \emph{failures}, 
we define $\E{T^*}$ be the expected time of a run with $\lambda_0=O\left(\sum_{i=0}^{d-1}\frac{1}{s_{i}}\right)$ until either a global optimum is found or a failure occurs. As long as no failure occurs, Theorem~\ref{thm:runtime-of-elitist-algorithm} can be applied with $a := 0$ and thus we obtain $\E{T^*}=O\left(\sum_{i=0}^{d-1}\frac{1}{s_{i}}\right)$.

Since failures have a probability of $O(1/(n \log n))$, we can restart our arguments whenever a failure happens and in expectation there would be at most $1+O(1/(n \log n))$ attempts. 
Arguing as in~\cite[proof of Theorem~3.5]{Hevia2021}, in each restart of the analysis $\lambda_0$ would take the $\lambda$-value at the time of a failure, denoted as $\lambdafail$. By Lemma~\ref{lem:bound-on-expected-lambda-t},
\[
    \E{\lambdafail} \le \lambda_0 + O\left(\frac{1}{\pmin}\right)
    \le O\left(\sum_{i=0}^{d-1}\frac{1}{s_{i}}\right)
\]
and this term, multiplied by $O(1/(n \log n))$, can easily be absorbed in our claimed upper bound. 
%
%
\end{proof}

\section{Bounds on unimodal functions}

We now show how to apply Theorems~\ref{thm:generations-ewhfunction} and~\ref{thm:optimisationTime-ewhfunction} to obtain novel bounds on the expected optimisation time of the \saocl on \emph{everywhere hard} unimodal functions, the benchmark function \leadingones and a new function class \onemaxblocks that allows us to vary the difficulty of the easiest fitness levels.


\begin{definition}
\label{LeadingOnes-variants}
Let $k, n \in \mathbb{N}$ such that $n/k \in \mathbb{N}$, then for $x = (x_1 \dots x_n) \in \{0, 1\}^n$ we define:
\begin{itemize}[leftmargin=1.6em]
    \item $\leadingones(x) := \sum_{i=1}^{n}  \prod_{j=1}^{i} x_j$,
    \item ${\onemaxblocks(x) := \sum_{j=1}^{\lfloor n/k \rfloor} \left(\prod_{i=1}^{(j-1)k} x_i\right) \cdot \sum_{i=(j-1)k+1}^{jk} x_i}$,
    \item a fitness function is called unimodal if every non-optimal search point~$x$ has a Hamming neighbour (a search point that only differs in one bit from~$x$) with strictly larger fitness.
\end{itemize}
\end{definition}

Recall that \textsc{LeadingOnes} returns the number of \onebits in the longest prefix that only contains ones.
The proposed function class, \onemaxblocks, has a similar structure. It is comprised of \emph{blocks} of $k$ bits. A block is \emph{complete} if it only contains \onebits and \emph{incomplete} otherwise. The function returns the number of \onebits in the longest prefix of completed blocks plus the number of \onebits in the first incomplete block.
Evolutionary algorithms typically optimise this function by optimising each \onemax-like block of size $k$ from left to right until the global optimum $1^n$ is reached. We can tune the maximum success probability $\pmax$ by assigning different values to~$k$. If $k=1$ the function equals \leadingones where $\pmax=\Theta(1/n)$. Increasing $k$ increases the maximum success probability. If $k=n$ the function equals \onemax and ${\pmax=\Theta(1)}$. 



\begin{theorem}\label{thm:optimisationTime-unimodal}
Let $s>0$ and $F>1$ be constants. The expected number of generations and evaluations of the \saocl using standard bit mutation with mutation probability $\chi/n$, $\chi=\Theta(1)$, or heavy-tailed mutations with constant $\beta>1$ is at most
\begin{enumerate}
    \item $O(d)$ expected generations and $O(dn)$ expected evaluations on all unimodal functions with $d+1 \ge \log n$ fitness values that are everywhere hard for the considered algorithm\label{it:unimodal},
    \item $O(n)$ expected generations and $O(n^2)$ expected evaluations on \leadingones\label{it:leadingones}, and
    \item $O(n)$ expected generations and $O\left(\frac{n^2\log k}{k}\right)$ expected evaluations on \onemaxblocks with $1 < k\le n^{1-\varepsilon}$ for some constant $0<\varepsilon<1$ and, additionally, $k \le n^{\beta-1-\varepsilon}$ if heavy-tailed mutations are used. \label{it:leadingblocks}
    
\end{enumerate}
\end{theorem}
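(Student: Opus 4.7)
The plan is to apply Theorems~\ref{thm:generations-ewhfunction} and~\ref{thm:optimisationTime-ewhfunction} to each of the three cases; for each case I need to (i) verify everywhere hardness, (ii) estimate $d$ and $\pmin$, and (iii) bound $\sum_{i=0}^{d-1} 1/s_i$. Both theorems already do the heavy lifting, so the work is in the per-function parameter estimates.

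For Part~\ref{it:unimodal}, unimodality guarantees that from every non-optimal search point there is a single-bit flip that strictly improves the fitness. For standard bit mutation with rate $\Theta(1/n)$ the probability of flipping exactly that bit and no others is $(\chi/n)(1-\chi/n)^{n-1} = \Omega(1/n)$; for heavy-tailed mutation the rate $1/n$ is selected with probability $\Theta(1)$ by the computation in the proof of Lemma~\ref{lem:probability-fallback-SBM-heavytailed}, giving the same $\Omega(1/n)$ bound. Hence $s_i = \Omega(1/n)$ for all $i < d$, so $\pmin = \Omega(1/n)$ and $\log(1/\pmin) = O(\log n) = O(d)$ by the assumption $d+1 \ge \log n$. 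Plugging these into Theorems~\ref{thm:generations-ewhfunction} and~\ref{thm:optimisationTime-ewhfunction} yields $O(d)$ generations and $O(dn)$ evaluations. Part~\ref{it:leadingones} is then an immediate corollary: \leadingones is unimodal with $d = n$, and every improvement requires flipping the unique first $0$-bit, so $\pmax = O(1/n)$ under either operator, certifying everywhere hardness.

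For Part~\ref{it:leadingblocks}, fitness values are indexed by pairs $(m,j)$ where $m$ is the number of complete leading blocks and $j \in \{0,\dots,k-1\}$ is the $1$-count of the first incomplete block, so $d = n$. An improvement requires flipping a bit among the first $k$ positions of the first incomplete block without destroying any leading $1$-bit, so the maximum probability $\pmax$ is attained at the all-zero search point, where no preservation constraint applies. For standard bit mutation this gives $\pmax \le 1-(1-\chi/n)^k = O(k/n) = O(n^{-\varepsilon})$. For heavy-tailed mutation, bounding $1-(1-\chi/n)^k \le \min(\chi k/n, 1)$ and splitting the sum over rates at $\chi = n/k$ with weights $\Theta(\chi^{-\beta})$ yields
\begin{align*}
\pmax = O\!\left(\tfrac{k}{n}\textstyle\sum_{\chi=1}^{n/k}\chi^{1-\beta}\right) + O\!\left((n/k)^{1-\beta}\right) = O\!\left((k/n)^{\min(1,\beta-1)}\right),
\end{align*}
which is $O(n^{-\varepsilon'})$ for some constant $\varepsilon' > 0$ precisely under the additional condition $k \le n^{\beta-1-\varepsilon}$ when $\beta < 2$ (and automatically under $k \le n^{1-\varepsilon}$ when $\beta \ge 2$). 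For the evaluations bound, flipping a single specific $0$-bit in the first incomplete block and nothing else is improving, so $s_{mk+j} = \Omega((k-j)/n)$, and summing gives
\begin{align*}
\sum_{i=0}^{d-1} \frac{1}{s_i} = O\!\left(\frac{n}{k}\cdot\sum_{j=0}^{k-1}\frac{n}{k-j}\right) = O\!\left(\frac{n^2\log k}{k}\right).
\end{align*}
Since the hardest levels still satisfy $s_i = \Omega(1/n)$, we have $\pmin = \Omega(1/n)$ and Theorem~\ref{thm:generations-ewhfunction} gives $O(n)$ generations.

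The main obstacle is the estimate of $\pmax$ for the heavy-tailed operator: because the power-law distribution places non-negligible mass on large rates, the contribution to the improvement probability at the all-zero state scales like $(k/n)^{\beta-1}$ rather than $k/n$ when $1 < \beta < 2$, and getting everywhere hardness then requires the stricter condition $k \le n^{\beta-1-\varepsilon}$. Once this dichotomy in $\beta$ is handled, the remainder is a direct instantiation of the general bounds.
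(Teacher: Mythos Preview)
Your proposal is correct and follows the same overall strategy as the paper: verify everywhere hardness, bound $\pmin$ and the $s_i$, then invoke Theorems~\ref{thm:generations-ewhfunction} and~\ref{thm:optimisationTime-ewhfunction}. There is one imprecision worth flagging: for \leadingones under heavy-tailed mutation your claim $\pmax = O(1/n)$ is too strong when $1 < \beta < 2$. The probability of flipping a specific bit under heavy-tailed mutation is $\Theta(1/n)\sum_{\chi=1}^{n/2}\chi^{1-\beta}$, which for $\beta<2$ is $\Theta(n^{1-\beta})$, not $O(1/n)$. The paper carries out exactly this computation and obtains $\pmax = O(n^{1-\beta})$, which still yields everywhere hardness with $\varepsilon = \beta-1$; so the conclusion stands, but the intermediate bound needs correcting.

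For \onemaxblocks with heavy-tailed mutation your route diverges slightly from the paper's. The paper simply observes that an improvement requires flipping one of at most $k$ specified bits, so $\pmax$ is at most $k$ times the \leadingones bound, giving $\pmax = O(k\,n^{1-\beta})$ directly; this is a one-line union bound that makes the condition $k \le n^{\beta-1-\varepsilon}$ transparent. Your argument instead bounds the improvement probability by $\min(\chi k/n,1)$ and splits the sum at $\chi = n/k$, arriving at $\pmax = O((k/n)^{\min(1,\beta-1)})$. This is actually a sharper estimate than the paper's (since $k^{\beta-1} \le k$ for $\beta < 2$), and would in principle allow a weaker restriction on $k$, but it costs more work. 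Both arguments suffice for the theorem as stated.
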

\begin{proof}
The set of everywhere hard unimodal functions by definition meet the conditions in Definition~\ref{def:everywhere-hard}, therefore we can apply Theorems~\ref{thm:generations-ewhfunction} and~\ref{thm:optimisationTime-ewhfunction} directly. We only need to bound $\pmin$. 
Given that every search point has a strictly better Hamming neighbour, the success probability of all fitness levels is at least as large as the probability of flipping only one specific bit. For standard bit mutations with mutation probability $\chi/n$, this is at least
\begin{align*}
    \pmin\ge\frac{\chi}{n}\left(1-\frac{\chi}{n}\right)^{n-1}\ge\frac{\chi}{e^{\chi}n} \cdot \left(1 - \frac{\chi}{n}\right)^{1-\chi} = \Omega(1/n).
\end{align*}
For heavy-tailed mutations we also have $\pmin = \Omega(1/n)$ as there is a constant probability of choosing a mutation rate of $1/n$.
Applying Theorems~\ref{thm:generations-ewhfunction} and~\ref{thm:optimisationTime-ewhfunction} yields $O(d + \log n) = O(d)$ generations and $O(dn)$ evaluations in expectation.

For the statement on \leadingones we need to show that the function is everywhere hard for the considered algorithms. A necessary condition for an offspring to be better than the parent is to flip the leftmost \zerobit, hence the success probability of standard bit mutation on all fitness levels is bounded by 
$p^+_{x}\le\frac{\chi}{n}$ and \leadingones meets the conditions of an everywhere hard unimodal function. Thus, it is covered by the previous statement with $d := n$. 
For heavy-tailed mutations, a much larger mutation rate might be used, hence we need to be more careful. 
Heavy-tailed mutation chooses a mutation probability $\chi^*/n$ according to a power-law distribution with parameter $\beta$, truncated at $n/2$. The probability of choosing a certain mutation rate $\chi/n$ is 
\begin{align*}
\prob{\chi^*=\chi}=\frac{\chi^{-\beta}}{\sum_{j=1}^{n/2}j^{-\beta}} \le \frac{\chi^{-\beta}}{\zeta(\beta)-\rho},
\end{align*}
where the inequality is taken from~\cite{Doerr2017-fastGA} and $\zeta(\beta)$ is the Riemann zeta function $\zeta$ evaluated at $\beta$ and $\rho=\frac{\beta}{\beta-1}\left(\frac{n}{2}\right)^{-\beta+1}=o(1)$. 
Given a mutation probability of $\chi/n$ (where $\chi$ is no longer restricted to a constant), the probability of flipping the first \zerobit is $\chi/n$. Hence, 
\begin{align*}
    p^+&\le\sum_{\chi=1}^{n/2}\frac{\chi^{-\beta}}{\zeta(\beta)-\rho} \cdot \frac{\chi}{n}
    = \frac{\sum_{\chi=1}^{n/2}\chi^{1-\beta}}{n(\zeta(\beta)-\rho)} 
    = \Theta\left(\frac{1}{n}\right) \cdot \sum_{\chi=1}^{n/2}\chi^{1-\beta}.
\end{align*}
Since $\beta > 1$, $\chi^{1-\beta}$ is strictly decreasing with~$\chi$ and, using $\chi^{1-\beta} \le \int_{\chi-1}^{\chi} x^{1-\beta} \;\mathrm{d}x$, we can bound the sum by an integral:
\[
    \sum_{\chi=1}^{n/2} \chi^{1-\beta} \le \int_{0}^{n/2} \chi^{1-\beta} \;\mathrm{d}\chi = \frac{(n/2)^{2-\beta}}{2-\beta}.
\]
Together, we get
\begin{align*}
    p^+&\le \Theta\left(\frac{1}{n}\right) \cdot \frac{(n/2)^{2-\beta}}{2-\beta} =
    O(n^{1-\beta})
\end{align*}
fitting the definition of everywhere hardness for ${\varepsilon := \beta-1 > 0}$.

For \onemaxblocks, a search point of fitness $i < n$ has ${i \bmod k}$ \onebits in its first incomplete block. All non-optimal search points can only be improved by flipping at least one \zerobit in the first incomplete block. Hence, a necessary condition for an offspring to be better than the parent is to flip one of the $k-(i\;\mathrm{mod}\;{k}) \le k$ \zerobits in the first incomplete block. 
Hence, all success probabilities can be bounded by $k$ times the bound on the success probability for \leadingones.
Thus, for all non-optimal~$x$, $p^+_x \le \chi \cdot k/n \le \chi n^{-\varepsilon}$ for standard bit mutations using the assumption $k \le n^{1-\varepsilon}$
and $p_x^+ \le O(k \cdot n^{1-\beta}) = O(n^{-\varepsilon})$ for heavy-tailed mutation using the assumption $k \le n^{\beta-1-\varepsilon}$.
For both operators, \onemaxblocks meets the conditions of an everywhere hard unimodal function. 

A sufficient condition for an offspring to be better than its parent of fitness~$i$ is to flip only one of the $k-(i\;\mathrm{mod}\;{k})$ \zerobits in the first incomplete block. Standard bit mutations do this with probability
\begin{align*}
    s_i\ge\frac{\chi(k-(i\;\mathrm{mod}\;{k}))}{n}\left(1-\frac{\chi}{n}\right)^{n-1}.
\end{align*}
By Theorem~\ref{thm:generations-ewhfunction}, the \saocl optimises \onemaxblocks with $k\le n^{1-\varepsilon}$ in $O(n)$ generations.
By Theorem~\ref{thm:optimisationTime-ewhfunction} the expected number of evaluations is at most
\begin{align*}
    O\left(\sum_{i=1}^{n}\frac{\left(1 - \frac{\chi}{n}\right)^{-n+1}n}{\chi(k-(i\;\mathrm{mod}\;{k}))}\right)
    = O\left(\frac{n}{k}\cdot \sum_{j=1}^{k}\frac{n}{\chi j}\right)
    = O\left(\frac{n^2\log k}{k}\right).
\end{align*}
If $k=1$ \onemaxblocks equals \leadingones and the expected number of evaluations is $O(n^2)$. 
For heavy-tailed mutations a mutation rate of $1/n$ is used with constant probability, hence the above asymptotic probability bounds apply as well.
\end{proof}

\section{Very Small Mutation Rates Make All Functions Everywhere Hard}

In this short section 
we remark that, if the \saocl uses standard bit mutation with mutation rate $1/n^{1+\varepsilon}$ for some constant $0<\varepsilon$ then all functions are everywhere hard since any mutation will create a copy of the parent with probability at least $1-n^{-\varepsilon}$ and thus the probability of an offspring improving the fitness is at most $\pmax \le n^{-\varepsilon}$. Thus, functions like \onemax, where large constant values of $s$ result in exponential runtimes (Theorem~4.1 and~4.4 in~\cite{Hevia2021arxiv}), can be solved in polynomial expected time for arbitrary constant~$s$. For example, the \saocl with every constant $s>1$ can solve \onemax in $O(n)$ expected generations and $O(n^{1+\varepsilon}\log n)$ expected evaluations. 


\begin{corollary}
Let $0<\varepsilon<1$, the update strength $F>1$ and the success rate $s>0$ be constants. For every function $f$ with $d+1=n^{o(\log n)}$ fitness levels, the \saocl using standard bit mutation with mutation rate $1/n^{1+\varepsilon}$ optimises $f$ in $O\left(d+\log\left(1/\pmin\right)\right)$ expected generations and $O\left(\sum_{i=1}^{d-1}\frac{1}{s_{i}}\right)$ expected evaluations.
\end{corollary}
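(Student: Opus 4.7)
The plan is to verify the hypotheses of Theorem~\ref{thm:generations-ewhfunction} and Theorem~\ref{thm:optimisationTime-ewhfunction} and then invoke them directly. The only substantive step is to establish that, under the very small mutation probability $p = 1/n^{1+\varepsilon}$, every fitness function $f$ with $d+1 = n^{o(\log n)}$ levels becomes everywhere hard in the sense of Definition~\ref{def:everywhere-hard}.

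To establish everywhere hardness, I would bound $p_x^+$ for an arbitrary non-optimal~$x$. A necessary condition for an offspring to strictly improve its parent is that at least one bit flips. The probability that \emph{no} bit flips is $(1 - n^{-1-\varepsilon})^n$, which by Bernoulli's inequality is at least $1 - n \cdot n^{-1-\varepsilon} = 1 - n^{-\varepsilon}$. Hence $p_x^+ \le n^{-\varepsilon}$ uniformly in~$x$, so $\pmax \le n^{-\varepsilon}$, exactly matching Definition~\ref{def:everywhere-hard} with the given~$\varepsilon$. I would also note that $p = 1/n^{1+\varepsilon}$ lies in $O(1/n) \cap n^{-O(1)}$, satisfying the assumption of Theorem~\ref{thm:generations-ewhfunction}, and that $p_x^+ > 0$ for every non-optimal~$x$ since any single improving bit-flip has strictly positive probability, satisfying the assumption of Theorem~\ref{thm:optimisationTime-ewhfunction}.

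With the preconditions verified, a direct application of Theorem~\ref{thm:generations-ewhfunction} yields the claimed $O(d + \log(1/\pmin))$ bound on the expected number of generations, using the hypothesis $d+1 = n^{o(\log n)}$. Theorem~\ref{thm:optimisationTime-ewhfunction} additionally requires $\lambda_0 = O\bigl(\sum_{i=0}^{d-1} 1/s_i\bigr)$; this is immediate since Algorithm~\ref{alg:saocl} initialises $\lambda_0 = 1$, and it yields the claimed $O\bigl(\sum_{i=0}^{d-1} 1/s_i\bigr)$ expected evaluations, which asymptotically coincides with the $O\bigl(\sum_{i=1}^{d-1} 1/s_i\bigr)$ stated. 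There is no serious obstacle: the proof is a direct composition of the everywhere-hardness observation with the two main theorems of the paper. The only care needed is to rigorously turn the informal preceding paragraph into the precise bound $\pmax \le n^{-\varepsilon}$, which the Bernoulli estimate accomplishes in one line.
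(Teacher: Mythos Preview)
Your proposal is correct and matches the paper's approach: the paper likewise observes that with mutation probability $1/n^{1+\varepsilon}$ an offspring is a copy of its parent with probability at least $1 - n^{-\varepsilon}$, whence $\pmax \le n^{-\varepsilon}$, and then the corollary follows directly from Theorems~\ref{thm:generations-ewhfunction} and~\ref{thm:optimisationTime-ewhfunction}. One small correction: your justification of $p_x^+ > 0$ via ``any single improving bit-flip'' fails at local optima of a general (non-unimodal)~$f$; instead note that mutating directly to a global optimum has probability $p^{k}(1-p)^{n-k} > 0$, or simply invoke Lemma~\ref{lem:probability-fallback-SBM-heavytailed}, which already gives $p_x^+ \ge n^{-O(n)}$ for this mutation operator.
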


The conclusion is that making an algorithm less efficient (in the sense of introducing large self-loops) can improve performance as the algorithm shows a more stable search behaviour. Similar observations were made before for the \ocl with fixed~$\lambda$~\cite{Rowe2014} and, in a wider sense, in evolution with partial information~\cite{Dang2016}.

\section{Conclusions}

We have shown that the non-elitist \saocl is not affected by the choice of the success rate (from positive constants) if the problem in hand is everywhere hard, that is, improvements are always found with a probability of at most $n^{-\varepsilon}$. This is in stark contrast to functions with easy slopes like \onemax, on which the \saocl takes exponential time if $s$ is too large, since frequent improvements drive down the population size. 

Our analysis extends previous work~\cite{Hevia2021,Hevia2021FOGA} on \onemax and \cliff to all everywhere hard functions. Moreover, our results apply to both standard bit mutation as well as heavy-tailed mutations. 
The expected number of evaluations is bounded by the same fitness-level upper bound as known for the \oea using the same mutation operator. Self-adjusting the offspring population size drastically reduces the number of generations to just $O(d + \log(1/\pmin))$, that is, roughly to the number of fitness values, improving and generalising previous results~\cite{Lassig2011}.
As a byproduct of our analysis, we have also shown an upper bound for the expected number of evaluations of the elitist \saopl on arbitrary fitness functions.

Although our results show that the \saocl is robust with respect to the choice of its hyper-parameters on hard functions, it remains an open problem how to self-adjust the offspring population size~$\lambda$ for the \ocl in such a way that the algorithm performs well on both easy and hard functions (independent of the choice of its hyper-parameters) without worsening the runtime guarantees obtained for the multimodal function \cliff.

\begin{acks}
This research has been supported by CONACYT under the grant no. 739621 and registration no. 843375.
\end{acks}

\balance

\end{document}